\DeclareMathOperator*{\argmin}{arg\,min}
\newcommand{\R}{\mathbb{R}}
\newcommand{\Exp}{\mathbb{E}}
\newcommand{\eqdef}{\overset{\text{def}}{=}}
\def\<#1,#2>{\left\langle #1, #2 \right\rangle}
\newcommand{\grad}{\nabla}
\newtheorem{theorem}{Theorem}
\newtheorem{lemma}[theorem]{Lemma}
\newtheorem{proposition}[theorem]{Proposition}
\newtheorem{remark}[theorem]{Remark}
\newcommand{\pp}{\mathcal{P}}
\newcommand{\FedOpt}{Federated Optimization\xspace}
\newcommand{\fedopt}{federated optimization\xspace}
\newcommand{\SVRG}{SVRG\xspace}
\newcommand{\algname}{FSVRG\xspace}  
\newcommand{\node}{node\xspace} 
\newcommand{\nodes}{nodes\xspace}
\newcommand{\iid}{IID\xspace}
\newcommand{\BO}{\mathcal{O}}
\newcommand{\A}{\mathcal{A}}
\title{Federated Optimization: \\ Distributed Machine Learning for On-Device Intelligence}
\author{
Jakub Kone\v{c}n\'{y} \\ University of Edinburgh \\ \texttt{kubo.konecny@gmail.com}
\and H. Brendan McMahan \\ Google \\ \texttt{mcmahan@google.com}
\and Daniel Ramage \\ Google \\ \texttt{dramage@google.com} 
\and Peter Richt\'{a}rik \\ University of Edinburgh \\ \texttt{peter.richtarik@ed.ac.uk}}
\begin{document}

\maketitle
\begin{abstract}
We introduce a new and increasingly relevant setting for distributed optimization in machine learning, where the data defining the optimization are unevenly distributed over an extremely large number of nodes. The  goal is to train a high-quality centralized model. We refer to this setting as \emph{\FedOpt}. In this setting, communication efficiency is of the utmost importance and minimizing the number of rounds of communication is the principal goal.

A motivating example arises when we keep the training data locally on users' mobile devices instead of logging it to a data center for training. In \fedopt, the devices are used as compute nodes performing computation on their local data in order to update a global model. We suppose that we have extremely large number of devices in the network --- as many as the number of users of a given service, each of which has only a tiny fraction of the total data available. In particular, we expect the number of data points available locally to be much smaller than the number of devices. Additionally, since different users generate data with different patterns, it is reasonable to assume that no device has a representative sample of the overall distribution.

We show that existing algorithms are not suitable for this setting, and propose a new algorithm which shows encouraging experimental results for sparse convex problems. This work also sets a path for future research needed in the context of \fedopt.
\end{abstract}

\section{Introduction}
\label{sec:intro}
Mobile phones and tablets are now the primary computing devices for
many people. In many cases, these devices are rarely separated from 
their owners \cite{CNNSmartphoneUsage},
and the combination of rich user interactions and powerful sensors
means they have access to an unprecedented amount of data, much of it
private in nature. Models learned on such data hold the promise of
greatly improving usability by powering more intelligent applications,
but the sensitive nature of the data means there are risks and
responsibilities to storing it in a centralized location.

We advocate an alternative --- {\em federated learning} --- that leaves the training data distributed on the mobile devices, and learns a shared model by aggregating locally computed updates via a central coordinating server. This is a direct application of the principle of focused collection or data minimization proposed by the 2012 White House report on the privacy of consumer data \cite{whitehouse13privacy}. Since these updates are specific to improving the current model, they can be purely ephemeral --- there is no reason to store them on the server once they have been applied. Further, they will never contain more information than the raw training data (by the data processing inequality), and will generally contain much less. A principal advantage of this approach is the decoupling of model training from the need for direct access to the raw training data. Clearly, some trust of the server coordinating the training is still required, and depending on the details of the model and algorithm, the updates may still contain private information. However, for applications where the training objective can be specified on the basis of data available on each client, federated learning can significantly reduce privacy and security risks by limiting the attack surface to only the device, rather than the device and the cloud.

If additional privacy is needed, randomization techniques from differential privacy can be used.  The centralized algorithm could be modified to produce a differentially private model~\cite{chaudhuri11dperm, dwork14book, abadi2016deep}, which allows the model to be released while protecting the privacy of the individuals contributing updates to the training process.  If protection from even a malicious (or compromised) coordinating server is needed, techniques from local differential privacy can be applied to privatize the individual updates \cite{duchi14privacy}.  Details of this are beyond the scope of the current work, but it is a promising direction for future research.

A more complete discussion of applications of federated learning as
well as privacy ramifications can be found
in~\cite{mcmahan2016federated}. Our focus in this work will be on \fedopt, the
optimization problem that must be solved in order to make federated
learning a practical alternative to current approaches.

\subsection{Problem Formulation}
The optimization community has seen an explosion of interest in solving problems with finite-sum structure in recent years. In general, the objective is formulated as
\begin{equation}
\label{eq:problem}
\min_{w \in \R^d} f(w) \qquad \text{where} \qquad f(w) \eqdef \frac{1}{n} \sum_{i=1}^n f_i(w).
\end{equation}
The main source of motivation are problems arising in machine learning. The problem structure~\eqref{eq:problem} covers linear or logistic regressions, support vector machines, but also more complicated models such as conditional random fields or neural networks.

We suppose we have a set of input-output pairs $\{ x_i, y_i \}_{i = 1}^n$, and a loss function, giving rise to the functions $f_i$. Typically, $x_i \in \R^d$ and $y_i \in \R$ or $y_i \in \{ -1, 1 \}$. Simple examples include 
\begin{itemize}
\item linear regression:  $f_i(w) = \frac12 (x_i^Tw - y_i)^2$, $y_i \in \R$
\item logistic regression: $f_i(w) = -\log(1 + \exp(-y_i x_i^T w))$, $y_i \in \{-1, 1\}$ 
\item support vector machines: $f_i(w) = \max\{0, 1 - y_i x_i^T w \}$, $y_i \in \{-1, 1\}$
\end{itemize}

More complicated non-convex problems arise in the context of neural networks, where rather than via the linear-in-the-features mapping $x_i^T w$, the network makes prediction through a non-convex function of the feature vector $x_i$. However, the resulting loss can still be written as $f_i(w)$, and gradients can be computed efficiently using backpropagation.

The amount of data that businesses, governments and academic projects collect is rapidly increasing. Consequently, solving problem~\eqref{eq:problem} arising in practice is often impossible on a single \node, as merely storing the whole dataset on a single \node becomes infeasible. This necessitates the use of a distributed computational framework, in which the training data describing the problem is stored in a distributed fashion across a number of interconnected \nodes and the optimization problem is solved collectively by the cluster of nodes.

Loosely speaking, one can use any network of \nodes to simulate a single powerful \node, on which one can run any algorithm. The
practical issue is that the time it takes to communicate between a
processor and memory on the same \node is normally many
orders of magnitude smaller than the time needed for two \nodes to
communicate; similar conclusions hold for the energy required
\cite{shalf2011exascale}. Further, in order to take advantage of parallel
computing power on each node, it is necessary to subdivide the problem
into subproblems suitable for independent/parallel computation.

State-of-the-art optimization algorithms are typically inherently sequential. Moreover, they usually rely on performing a large number of very fast iterations. The problem stems from the fact that if one needs to perform a round of communication after each iteration, practical performance drops down dramatically, as the round of communication is much more time-consuming than a single iteration of the algorithm.

These considerations have lead to the development of novel algorithms specialized for distributed optimization (we defer thorough review until  Section~\ref{sec:relatedWork}). For now, we note that most of the results in literature work in the setting where the data is evenly distributed, and further suppose that $K \ll n / K$ where $K$ is the number of \nodes. This is indeed often close to reality when data is stored in a large data center. Additionally, an important subfield of the field of distributed learning relies on the assumption  that each machine has a representative sample of the data available locally. That is, it is assumed that each machine has an \iid sample from the underlying distribution. However, this assumption is often too strong; in fact,  even in the data center paradigm this is often not the case since the data on a single \node can be close to each other on a temporal scale, or clustered by its geographical origin. Since the patterns in the data can change over time, a feature might be present frequently on one \node, while not appear on another at all.

The \fedopt setting describes a novel optimization scenario where none of the above assumptions  hold. We outline this setting in more detail in the following section.

\subsection{The Setting of \FedOpt}
\label{sec:intro:challenge}

The main purpose of this paper is to bring to the attention of the machine learning and optimization communities a new and increasingly practically relevant setting for distributed optimization, where none of the typical assumptions are satisfied, and communication efficiency is of utmost importance. In particular, algorithms for \fedopt must handle training data with the following characteristics:
\begin{itemize}
\item \textbf{Massively Distributed}: Data points are stored across a large number of \nodes $K$. In particular, the number of \nodes can be much bigger than the average number of training examples stored on a given \node ($n/K$).
\item \textbf{Non-\iid}: Data on each \node may be drawn from a different distribution; that is, the data points available locally are far from being a representative sample of the overall distribution.
\item \textbf{Unbalanced}: Different \nodes may vary by orders of magnitude in the number of training examples they hold.
\end{itemize}

In this work, we are particularly concerned with \textbf{sparse} data, where some features occur on  a small subset of nodes or data points only. Although this is not necessary characteristic of the setting of \fedopt, we will show that the sparsity structure can be used to develop an effective algorithm for \fedopt. Note that data arising in the largest machine learning problems being solved nowadays, ad click-through rate predictions, are extremely sparse.

We are particularly interested in the setting where training data lives on users' mobile devices (phones and tablets), and the data may be privacy sensitive. The data $\{x_i, y_i\}$ is generated through device usage, e.g., via interaction with apps. Examples include predicting the next word a user will type (language modelling for smarter keyboard apps), predicting which photos a user is most likely to share, or predicting which notifications are most important. 

To train such models using traditional distributed learning algorithms, one would collect the training examples in a centralized location (data center) where it could be shuffled and distributed evenly over proprietary compute nodes.  In this paper we propose and study an alternative model: the training examples are not sent to a centralized location, potentially saving  significant network bandwidth and providing additional privacy protection. In exchange, users allow some use of their devices' computing power, which shall be used to train the model.

In the communication model of this paper, in each round we send an update
$\delta \in \R^d$ to a centralized server, where $d$ is the dimension of the model
being computed/improved. The update $\delta$ could be a gradient vector, for
example.  While it is certainly possible that in some applications the
$\delta$ may encode some private information of the user, it is likely
much less sensitive (and orders of magnitude smaller) than the
original data itself. For example, consider the case where the raw
training data is a large collection of video files on a mobile
device.  The size of the update $\delta$ will be \emph{independent} of
the size of this local training data corpus. 
We show that a global model can be trained using a small number of
communication rounds, and so this also reduces the network
bandwidth needed for training by orders of magnitude compared to
copying the data to the datacenter.

Further, informally, we choose $\delta$ to be the minimum piece of
information necessary to improve the global model; its utility for
other uses is significantly reduced compared to the original
data. Thus, it is natural to design a system that does not store these
$\delta$'s longer than necessary to update the model, again increasing
privacy and reducing liability on the part of the centralized model
trainer. This setting, in which a single vector $\delta \in \R^d$ is communicated in each round, covers most existing first-order methods, including dual methods such as CoCoA+ \cite{ma2015distributed}.

Communication constraints arise naturally in the massively distributed setting, as network connectivity may be limited (e.g., we may wish to deffer all communication until the mobile device is charging and connected to a wi-fi network).  Thus, in realistic scenarios we may be limited to only a single round of communication per day. This implies that, within reasonable bounds, we have access to essentially unlimited local computational power. Consequently, the practical objective is solely to minimize the number of  communication rounds.

The main purpose of this work is initiate research into, and design a first practical implementation of \fedopt. Our results suggest that with suitable optimization algorithms, very little is lost by not having an \iid sample of the data available, and that even in the presence of a large number of \nodes, we can still achieve convergence in relatively few rounds of communication.

\section{Related Work}
\label{sec:relatedWork}

In this section we provide a detailed overview of the relevant literature. We particularly focus on algorithms that can be used to solve  problem~\eqref{eq:problem} in various contexts. First, in Sections~\ref{sec:relatedWork:general} and \ref{sec:oh0s9hs} we look at algorithms designed to be run on a single computer. In Section~\ref{sec:relatedWork:distributedSetting} we follow with a discussion of the distributed setting, where no single \node has direct access to all data describing $f$. We describe a paradigm for measuring the efficiency of distributed methods, followed by overview of existing methods and commentary  on whether they were designed with communication efficiency in mind or not.

\subsection{Baseline Algorithms}
\label{sec:relatedWork:general}

In this section we shall describe several fundamental baseline algorithms which can be used to solve problems of the form \eqref{eq:problem}.

\paragraph{Gradient Descent.} A trivial benchmark for solving  problems of structure~\eqref{eq:problem} is {\em Gradient Descent} (GD) in the case when functions $f_i$ are smooth (or Subgradient Descent for non-smooth functions) \cite{Nesterov-book}. The GD algorithm performs the iteration \[w^{t+1} = w^t - h_t \grad f(w^t),\] where $h_t>0$ is a  stepsize parameter. As we mentioned earlier, the number of functions, or equivalently, the number of training data pairs, $n$,  is typically very large. This makes GD impractical, as it needs to process the whole dataset in order to evaluate a single gradient and update the model. 

Gradient descent can be substantially accelerated, in theory and practice, via the addition of a momentum term. Acceleration ideas for gradient methods in convex optimization can be traced back to the work of Polyak \cite{Polyak-heavy-ball} and Nesterov \cite{Nesterov-1983, Nesterov-book}. While accelerated GD methods have a substantially better convergence rate, in each iteration they still need to do at least one pass over all data. As a result, they are not practical for problems where $n$ very large.

\paragraph{Stochastic Gradient Descent.} At present a basic, albeit in practice extremely popular, alternative to GD is { \em Stochastic Gradient Descent} (SGD), dating back to the seminal work of Robbins and Monro \cite{robbinsMonro}. In the context of~\eqref{eq:problem}, SGD samples a random function (i.e., a random data-label pair) $i_t \in \{1, 2, \dots, n\}$ in iteration $t$, and performs the update \[w^{t+1} = w^t - h_t \grad f_{i_t}(w^t),\] where $h_t>0$ is a stepsize parameter. Intuitively speaking, this method works because if $i_t$ is sampled uniformly at random from indices $1$ to $n$, the update direction is an unbiased estimate of the gradient --- $\Exp[\grad f_{i_t}(w)] = \grad f(w)$. However, noise introduced by sampling slows down the convergence, and a diminsihing sequence of stepsizes $h_k$ is necessary for convergence. For a theoretical analysis for convex functions we refer the reader to \cite{nemirovski2009robust, moulines2011bach, needell2014stochastic} and \cite{pegasos, MB2013} for SVM problems. In a recent review \cite{bottou2016optimization}, the authors outline further research directions. For a more practically-focused discussion, see \cite{bottou2012SGDtricks}. In the context of neural networks, computation of stochastic gradients is referred to as \emph{backpropagation} \cite{lecun2012efficientbackprop}. Instead of specifying the functions $f_i$ and its gradients explicitly, backpropagation is a general way of computing the gradient. Performance of several competitive algorithms for training deep neural networks has been compared in \cite{ngiam2011optimization}.

One common trick that has been practically observed to provide superior performance, is to replace random sampling in each iteration by going through all the functions in a random order. This ordering is replaced by another random order after each such cycle \cite{bottou2009curiously}. Theoretical understanding of this phenomenon had been a long standing open problem, understood recently in \cite{gurbuzbalaban2015randomreshuffling}.

The core differences between GD and SGD can be summarized as follows. GD has a fast convergence rate, but each iteration in the context of \eqref{eq:problem} is potentially very slow, as it needs to process the entire dataset in each iteration. On the other hand, SGD has slower convergence rate, but each iteration is fast, as the work needed is independent of number of data points $n$. For the problem structure of \eqref{eq:problem}, SGD is usually better, as for practical purposes relatively low accuracy is required, which SGD can in extreme cases achieve after single pass through data, while GD would make just a single update. However, if a high accuracy was needed, GD or its faster variants would prevail.

\subsection{A Novel Breed of Randomized Algorithms}\label{sec:oh0s9hs}

Recent years have seen an explosion of new randomized methods which, in a first approximation, combine the benefits of cheap iterations of SGD with fast convergence of GD. Most of these methods can be said to belong to one of two classes --- dual methods of the randomized coordinate descent variety, and primal methods of the stochastic gradient descent with variance reduction variety.

\paragraph{Randomized Coordinate Descent.}  Although the idea of coordinate descent has been around for several decades in various contexts (and for quadratic functions dates back even much further, to works on the Gauss-Seidel methods), it came to prominence in machine learning and optimization with the work of  Nesterov \cite{nesterovCD} which equipped the method with a randomization strategy. Nesterov's work on {\em Randomized Coordinate Descent} (RCD) popularized the method and demonstrated that randomization can be very useful for problems of structure \eqref{eq:problem}.

The RCD algorithm in each iteration chooses a random coordinate $j_t \in \{ 1, \dots, d\}$ and performs the update \[w^{t+1} = w^t - h_{j_t} \grad_{j_t} f(w^t) e_{j_t},\] where $h_{j_t}>0$ is a stepsize parameter, $\grad_j f(w)$ denotes the $j^{th}$ partial derivative of function $f$, and $e_j$ is the $j^{th}$ unit standard basis vector in $\R^d$. For the case of generalized linear models, when the data exhibits certain sparsity structure, it is possible to evaluate the partial derivative $\grad_j f(w)$ efficiently, i.e., without need to process the entire dataset, leading to a practically efficient algorithm, see for instance  \cite[Section 6]{richtarikCD}.

Numerous follow-up works extended the concept to proximal setting \cite{richtarikCD}, single processor parallelism \cite{bradley2011PCDM, richtarikCDparallel} and develop efficiently implementable acceleration \cite{leeSidfordCD}. All of these three properties were connected in a single algorithm in \cite{approx}, to which we refer the reader for a review of the early developments in the area of RCD, particularly to overview in Table~1 therein.

\paragraph{Stochastic Dual Coordinate Ascent.} When an explicit strongly convex, but not necessarily smooth, regularizer is added to the average loss \eqref{eq:problem}, it is possible to write down its (Fenchel) dual and the dual variables live in $n$-dimensional space. Applying RCD leads to an algorithm for solving \eqref{eq:problem} known under the name {\em Stochastic Dual Coordinate Ascent} \cite{SDCA}. This method has gained broad popularity with practicioners, likely due to the fact that for a number of loss functions, the method comes without the need to tune any hyper-parameters. The work \cite{SDCA} was first to show that by applying RCD  \cite{richtarikCD} to the dual problem, one also solves the primal problem \eqref{eq:problem}. For a theoretical and computational comparison of applying RCD to the primal versus the dual problems, see \cite{faceoff}.

A directly primal-dual randomized coordinate descent method called Quartz, was developed in \cite{Quartz}. It has been recently shown in SDNA \cite{SDNA} that incorporating curvature information contained in random low dimensional subspaces spanned by a few coordinates can sometimes lead to dramatic speedups. Recent works \cite{dfSDCAupdate, dfSDCACsiba} interpret the SDCA method in primal-only setting, shedding light onto why this method works as a SGD method with a version of variance reduction property. 

\bigskip
We now move the the second class of novel randomized algorithms which can be generally interpreted as variants of SGD, with an attempt to reduce variance inherent in the process of gradient estimation.

\paragraph{Stochastic Average Gradient.} The first notable algorithm from this class is the {\em Stochastic Average Gradient} (SAG) \cite{SAGNIPS, SAG}. The SAG algorithm stores an average of $n$ gradients of functions $f_i$ evalueated at different points in the history of the algorithm. In each iteration, the algotithm, updates randomly chosen gradient out of this average, and makes a step in the direction of the average. This way, complexity of each iteration is independent of $n$, and the algorithm enjoys a fast convergence. The drawback of this algorithm is that it needs to store $n$ gradients in memory because of the update operation. In the case of generalized linear models, this memory requirement can be reduced to the need of $n$ scalars, as the gradient is a scalar multiple of the data point. This methods has been recently extended for use in Conditional Random Fields \cite{SAGCRF}. Nevertheless, the memory requirement makes the algorithm infeasible for application even in relatively small neural networks.

A followup algorithm SAGA \cite{SAGA} and its simplification \cite{defazio2016simple}, modifies the SAG algorithm to achieve unbiased estimate of the gradients. The memory requirement is still present, but the method significantly simplifies theoretical analysis, and yields a slightly stronger convergence guarantee.

\paragraph{Stochastic Variance Reduced Gradient.} Another algorithm from the SGD class of methods is {\em Stochastic Variance Reduced Gradient}\footnote{The same algorithm was simultaneously introduced as Semi-Stochastic Gradient Descent (S2GD) \cite{S2GD}. Since the former work gained more attention, we will for clarity use the name SVRG throughout this paper.} (SVRG) \cite{SVRG} and \cite{S2GD, proxSVRG, mS2GD}. The SVRG algorithm runs in two nested loops. In the outer loop, it computes full gradient of the whole function, $\grad f(w^t)$, the expensive operation one tries to avoid in general. In the inner loop, the update step is iteratively computed as \[w = w - h [\grad f_i(w) - \grad f_i(w^t) + \grad f(w^t)].\] The core idea is that the stochastic gradients are used to estimate the change of the gradient between point $w^t$ and $w$, as opposed to estimating the gradient directly. We return to more detailed description of this algorithm in Section~\ref{sec:SVRG}.

The SVRG has the advantage that it does not have the additional memory requirements of SAG/SAGA, but it needs to process the whole dataset every now and then. Indeed, comparing to SGD, which typically makes significant progress in the first pass through data, SVRG does not make any update whatsoever, as it needs to compute the full gradient. This and several other practical issues have been recently addressed in \cite{practicalSVRG}, making the algorithm competitive with SGD early on, and superior in later iterations. Although there is nothing that prevents one from applying SVRG and its variants in deep learning, we are not aware of any systematic assessment of its performance in this setting. Vanilla experiments in \cite{SVRG, reddi2016stochastic} suggest that SVRG matches basic SGD, and even outperforms in the sense that variance of the iterates seems to be significantly smaller for SVRG. However, in order to draw any meaningful conclusions, one would need to perform extensive experiments and compare with state-of-the-art methods usually equipped with numerous heuristics.

There already exist attempts at combining SVRG type algorithms with randomized coordinate descent \cite{S2CD, wang2014randomized}. Although these works highlight some interesting theoretical properties, the algorithms do not seem to be practical at the moment; more work is needed in this area. The first attempt to unify algorithms such as SVRG and SAG/SAGA already appeared in the SAGA paper \cite{SAGA}, where the authors interpret SAGA as a midpoint between SAG and SVRG. Recent work \cite{reddi2015variance} presents a general algorithm, which recovers SVRG, SAGA, SAG and GD as special cases, and obtains an asynchronous variant of these algorithms as a byproduct of the formulation. SVRG can be equipped with momentum (and negative momentum), leading to a new accelerated SVRG method known as Katyusha \cite{Katyusha}. SVRG can be further accelerated via a raw clustering mechanism \cite{SVRG-rawclusters}.

\paragraph{Stochastic Quasi-Newton Methods.} A third class of new algorithms are the {\em Stochastic quasi-Newton} methods \cite{stochasticQN, bordes2009sgd}. These algorithms in general try to mimic the limited memory BFGS method (L-BFGS) \cite{LBFGS}, but model the local curvature information using inexact gradients --- coming from the SGD procedure. A recent attempt at combining these methods with SVRG can be found in \cite{moritz2015linearly}. In \cite{SBFGS}, the authors utilize recent progress in the area of stochastic matrix inversion \cite{inverse} revealing new connections with quasi-Newton methods, and devise a new  stochastic limited memory BFGS method working in tandem  with SVRG.  The fact that the theoretical understanding of this branch of research is the least understood and having several details making the implementation more difficult compared to the methods above may limit its wider use. However, this approach could be most promising for deep learning once understood better.

One important aspect of machine learning is that the Empirical Risk Minimization problem \eqref{eq:problem} we are solving is just a proxy for the Expected Risk we are ultimately interested in. When one can find exact minimum of the empirical risk, everything reduces to balancing approximation--estimation tradeoff that is the object of abundant literature --- see for instance \cite{vapnik1999overview}. An assessment of asymptotic performance of some optimization algorithms as \emph{learning} algorithms in large-scale learning problems\footnote{See \cite[Section 2.3]{BottouBousquet} for their definition of large scale learning problem.} has been introduced in \cite{BottouBousquet}. Recent extension in \cite{practicalSVRG} has shown that the variance reduced algorithms (SAG, SVRG, \dots) can in certain setting be better \emph{learning} algorithms than SGD, not just better optimization algorithms.

\paragraph{Further Remarks.} A general method, referred to as Universal Catalyst \cite{lin2015universal, frostig2015regularizing}, effectively enables conversion of a number of the algorithms mentioned in the previous sections to their `accelerated' variants. The resulting convergence guarantees nearly match lower bounds in a number of cases. However, the need to tune additional parameter makes the method rather impractical.

Recently, lower and upper bounds for complexity of stochastic methods on problems of the form \eqref{eq:problem} were recently obtained in \cite{Srebro-lower_and_upper2016}.

\subsection{Distributed Setting}
\label{sec:relatedWork:distributedSetting}

In this section we review the literature concerning algorithms for solving \eqref{eq:problem} in the distributed setting. When we speak about distributed setting, we refer to the case when the data describing the functions $f_i$ are not stored on any single storage device. This can include setting where one's data just don't fit into a single RAM/computer/node, but two is enough. This also covers the case where data are distributed across several datacenters around the world, and across many \nodes in those datacenters. The point is that in the system, there is no single processing unit that would have direct access to all the data. Thus, the distributed setting does not include single processor parallelism\footnote{It should be noted that some of the works presented in this section were originally presented as parallel algorithms. We include them anyway as many of the general ideas in carry over to the distributed setting.}. Compared with local computation on any single \node, the cost of communication between \nodes is much higher both in terms of speed and energy consumption \cite{bekkerman2011scaling, shalf2011exascale}, introducing new computational challenges, not only for optimization procedures.

We first reveiew a theoretical decision rule for determining the practically best algorithm for a given problem in Section~\ref{sec:relatedWork:paradigm}, followed by overview of distributed algorithms in Section~\ref{sec:relatedWork:distributedAlgorithms}, and communication efficient algorithms in Section~\ref{sec:relatedWork:commEffAlgs}. The following paradigm highlights why the class of communication efficient algorithms are not only preferable choice in the trivial sense. The communication efficient algorithms provide us with much more flexible tools for designing overall optimization procedure, which can make the algorithms inherently adaptive to differences in computing resources and architectures.

\subsubsection{A Paradigm for Measuring Distributed Optimization Efficiency}
\label{sec:relatedWork:paradigm}

This section reviews a paradigm for comparing efficency of distributed algorithms. Let us suppose we have many algorithms $\A$ readily available to solve the problem~\eqref{eq:problem}. The question is: ``How do we decide which algorithm is the best for our purpose?'' Initial version of this reasoning already appeared in \cite{ma2015distributed}, and applies also to \cite{AIDE}.

First, consider the basic setting on a single machine. Let us define $\mathcal{I}_\A(\epsilon)$ as the number of iterations algorithm $\A$ needs to converge to some fixed $\epsilon$ accuracy. Let $\mathcal{T}_\A$ be the time needed for a single iteration. Then, in practice, the best algorithm is one that minimizes the following quantity.\footnote{Considering only algorithms that can be run on a given machine.}
\begin{equation}
\label{eq:paradigmBasic}
\text{TIME} = \mathcal{I}_\A(\epsilon) \times \mathcal{T}_\A.
\end{equation}

The number of iterations $\mathcal{I}_\A(\epsilon)$ is usually given by theoretical guarantees or observed from experience. The $\mathcal{T}_\A$ can be empirically observed, or one can have idea of how the time needed per iteration varies between different algorithms in question. The main point of this simplified setting is to highlight key issue with extending algorithms to the distributed setting.

The natural extension to distributed setting is the following. Let $c$ be time needed for communication during a single iteration of the algorithm $\A$. For sake of clarity, we suppose we consider only algorithms that need to communicate a single vector in $\R^d$ per round of communication. Note that essentially all first-order algorithms fall into this category, so this is not a restrictive assumption, which effectively sets $c$ to be a constant, given any particular distributed architecture one has at disposal.

\begin{equation}
\label{eq:paradigm}
\text{TIME} = \mathcal{I}_\A(\epsilon) \times (c + \mathcal{T}_\A)
\end{equation}

The communication cost $c$ does not only consist of actual exchange of the data, but also many other things like setting up and closing a connection between \nodes. Consequently, even if we need to communicate very small amount of information, $c$ always remains above a nontrivial threshold.

Most, if not all, of the current state-of-the-art algorithms that are the best in setting of~\eqref{eq:paradigmBasic}, are stochastic and rely on doing very large number (big $\mathcal{I}_\A(\epsilon)$) of very fast (small $\mathcal{T}_\A$) iterations. As a result, even relatively small $c$ can cause the practical performance of those algorithms drop down dramatically, because $c \gg \mathcal{T}_\A$.

This has been indeed observed in practice, and motivated development of new methods, designed with this fact in mind from scratch, which we review in Section~\ref{sec:relatedWork:distributedAlgorithms}. Although this is a good development for academia --- motivation to explore new setting, it is not necessarily a good news for the industry.

Many companies have spent significant resources to build excellent algorithms to tackle their problems of form~\eqref{eq:problem}, fine tuned to the specific patterns arising in their data and side applications required. When the data companies collect grows too large to be processed on a single machine, it is understandable that they would be reluctant to throw away their fine tuned algorithms. This issue was first time explicitly addressed in CoCoA~\cite{ma2015distributed}, which is rather framework than a algorithm, which works as follows (more detailed description follows in Section~\ref{sec:relatedWork:commEffAlgs}).

The CoCoA framework formulates a general way to form a specific subproblem on each \node, based on data available locally and a single shared vector that needs to be distributed to all \nodes. Within a iteration of the framework, each \node uses \emph{any} optimization algorithm $\A$, to reach a relative $\Theta$ accuracy on the local subproblem. Updates from all \nodes are then aggregated to form an update to the global model.

The efficiency paradigm changes as follows:

\begin{equation}
\label{eq:paradigmNew}
\text{TIME} = \mathcal{I}(\epsilon, \Theta) \times (c + \mathcal{T}_\A(\Theta))
\end{equation}

The number of iterations $\mathcal{I}(\epsilon, \Theta)$ is independent of choice of the algorithm $\A$ used as a local solver, because there is theory predicting how many iterations of the CoCoA framework are needed to achieve $\epsilon$ accuracy, if we solve the local subproblems to relative $\Theta$ accuracy. Here, $\Theta = 0$ would mean we require the subproblem to be solved to optimality, and $\Theta = 1$ that we don't need any progress whatsoever. The general upper bound on number of iterations of the CoCoA framework is $\mathcal{I}(\epsilon, \Theta) = \frac{\mathcal{O}(\log(1/\epsilon))}{1 - \Theta}$ \cite{cocoaNIPS, cocoaICML, ma2015distributed} for strongly convex objectives. From the inverse dependence on $1 - \Theta$, we can see that there is a fundamental limit to the number of communication rounds needed. Hence, it will probably not be efficient to spend excessive resources to attain very high local accuracy (small $\Theta$). Time per iteration $\mathcal{T}_\A(\Theta)$ denotes the time algorithm $\A$ needs to reach the relative $\Theta$ accuracy on the local subproblem.

This efficiency paradigm is more powerful for a number of reasons.
\begin{enumerate}
\item It allows practicioners to continue using their fine-tuned solvers, that can run only on single machine, instead of having to implement completely new algorithms from scratch.

\item The actual performance in terms of number of rounds of communication is independent from the choice of optimization algorithm, making it much easier to optimize the overall performance.

\item Since the constant $c$ is architecture dependent, running optimal algorithm on one \node network does not have to be optimal on another. In the setting~\eqref{eq:paradigm}, this could mean moving from one cluster to another, a completely different algorithm is optimal, which is a major change. In the setting~\eqref{eq:paradigmNew}, this can be improved by simply changing $\Theta$, which is typically implicitly determined by number of iterations algorithm $\A$ runs for.
\end{enumerate}

In this work we propose a different way to formulate the local subproblems, which does not rely on duality as in the case of CoCoA. We also highlight that some algorithms seem to be particularly suitable to solve those local subproblems, effectively leading to novel algorithms for distributed optimization.

\subsubsection{Distributed Algorithms}
\label{sec:relatedWork:distributedAlgorithms}

As discussed below in Section \ref{sec:relatedWork:paradigm}, this setting creates unique challenges. Distributed optimization algorithms typically require a small number (1--4) of communication rounds per iteration. By communication round we typically understand a single MapReduce operation \cite{dean2008mapreduce}, implemented efficiently for iterative procedures \cite{MPI}, such as optimization algorithms. Spark \cite{zaharia2010spark} has been established as a popular open source framework for implementing distributed iterative algorithms, and includes several of the algorithms mentioned in this section.

Optimization in distributed setting has been studied for decades, tracing back to at least works of Bertsekas and Tsitsiklis \cite{bertsekas1989parallel, bertsekas1983distributed, tsitsiklis1984problems}. Recent decade has seen an explosion of interest in this area, greatly motivated by rapid increase of data availability in machine learning applications.

Much of the recent effort was focused on creating new optimization algorithms, by building variants of popular algorithms suitable for running on a single processor (See Section~\ref{sec:relatedWork:general}). A relatively common feature of many of these efforts is a) The computation overhead in the case of synchronous algorithms, and b) The difficulty of analysing asynchronous algorithms without restrictive assumptions. By computation overhead we mean that if optimization program runs in a compute-communicate-update cycle, the update part cannot start until all \nodes finish their computation. This causes some of the \nodes be idle, while remaining \nodes finish their part of computation, clearly an inefficient use of computational resources. This pattern often diminishes or completely reverts potential speed-ups from distributed computation. In the asynchronous setting in general, an update can be applied to a parameter vector, followed by computation done based on a now-outdated version of that parameter vector. Formally grasping this pattern, while keeping the setting realistic is often quite challenging. Consequently, this is very open area, and optimal choice of algorithm in any particular case is often heavily dependent on the problem size, details in its structure, computing architecture available, and above all, expertise of the practitioner.

This general issue is best exhibited with numerous attempts at parallelizing the Stochastic Gradient Descent and its variants. As an example, \cite{dekel2012optimal, duchi2012dual} provide theoretically linear speedup with number of \nodes, but are difficult to implement efficiently, as the \nodes need to synchronize frequently in order to compute reasonable gradient averages. As an alternative, no synchronization between workers is assumed in \cite{recht2011hogwild, agarwal2011distributed, duchi2013estimation}. Consequently, each worker reads $w^t$ from memory, parameter vector $w$ at time point $t$, computes a stochastic gradient $\grad f_i(w^t)$ and applies it to already changed state of the parameter vector $w^{t+\tau}$. The above mentioned methods assume that the delay $\tau$ is bounded by a constant, which is not necessarily realistic assumption\footnote{A bound on the delay $\tau$ can be deterministic or probabilistic. However, in practice, the delays are mostly about the number of \nodes in the network, and there rare very long delays, when a variety of operating system-related events can temporarily postpone computation of a single \node. To the best of our knowledge, no formal assumptions reflect this setting well. In fact, two recent works \cite{mania2015perturbed, leblond2016asaga} highlight subtle but important issue with labelling of iterates in the presence of asynchrony, rendering most of the existing analyses of asynchronous optimization algorithms incorrect.}. Some of the works also introduce assumptions on the sparsity structures or conditioning of the Hessian of $f$. Asymptotically optimal convergent rates were proven in \cite{duchi2015asynchronous} with considerably milder assumptions. Improved analysis of asynchronous SGD was also presented in \cite{de2015taming}, simultaneously with a version that uses lower-precision arithmetic was introduced without sacrificing performance, which is a trend that might find use in other parts of machine learning in the following years.

The negative effect of asynchronous distributed implementations of SGD seem to be negligible, when applied to the task of training very large deep networks --- which is the ultimate industrial application of today. The practical usefulness has been demonstrated for instance by Google's Downpour SGD \cite{largeNN} and Microsoft's Project Adam \cite{chilimbi2014project}.

The first distributed versions of Coordinate Descent algorithms were the Hydra and its accelerated variant, Hydra$^2$, \cite{hydra, hydra2}, which has been demonstrated to be very efficient on large sparse problems implemented on a computing cluster. An extended version with description of implementation details is presented in \cite{marecek2014distributed}. Effect of asynchrony has been explored and partially theoretically understood in the works of \cite{liu2013asynchronous, liu2015asynchronous}. Another asynchronous, rather framework than an algorithm, for coordinate updates, applicable to wider class of objectives is presented in \cite{peng2015arock}.

The data are assumed to be partitioned to \nodes by features/coordinates in the above algorithms. This setting can be restrictive if one is not able to distribute the data beforehand, but instead the data are distributed ``as is'' --- in which case the data are most commonly distributed by data points. This does not need to be an issue, if a dual version of coordinate descent is used --- in which the distribution is done by data points \cite{takac2015distributed} followed by works on Communication Efficient Dual Cooridante Ascent, described in next section. The use of duality however requires usage of additional explicit strongly convex regularization term, hence can be used to solve smaller class of problems. Despite the apparent practical disadvantages, variants of distributed coordinate descent algorithms are among the most widely used methods in practice.

Moving to variance reduced methods, distributed versions of SAG/SAGA algorithms have not been proposed yet. However, several distributed versions of the SVRG algorithm already exist. A scheme for replicating data to simulate iid sampling in distributed environment was proposed in \cite{lee2015distributed}. Although the performance is well analysed, the setting requires significantly stronger control of data distribution which is rarely practicaly feasible. A relatively similar method to Algorithm~\ref{alg:DS2GDnaive} presented here has been proposed in \cite{AIDE}, which was analysed, and in \cite{mahajan2015efficient}, a largely experimental work that can be also cast as communication efficient --- described in detail in Section~\ref{sec:relatedWork:commEffAlgs}.

Another class of algorithms relevant for this work is Alternating Direction Method of Multipliers (ADMM) \cite{boyd2011distributed, deng2016global}. These algorithms are in general applicable to much broader class of problems, and hasn't been observed to perform better than other algorithms presented in this section, in the machine learning setting of \eqref{eq:problem}.

\subsubsection{Communication-Efficient Algorithms}
\label{sec:relatedWork:commEffAlgs}

In this Section, we describe algorithms that can be cast as ``communication efficient''. The common theme of the algorithms presented here, is that in order to perform better in the sense of \eqref{eq:paradigm}, one should design algorithms with high $\mathcal{T}_\A$, in order to make the cost of communcation $c$ negligible.

Before moving onto specific methods, it is worth the noting some of the core limits concerning the problem we are trying to solve in distributed setting. Fundamental limitations of stochastic versions of the problem \eqref{eq:problem} in terms of runtime, communication costs and number of samples used are studied in \cite{shamir2014distributed}. Efficient algorithms and lower bounds for distributed statistical estimation are established in \cite{JMLR:v14:zhang13b, zhang2013information}. 

However, these works do not fit into our framework, because they assume that each \node has access to data generated \iid from a single distribution. In the case of \cite{JMLR:v14:zhang13b, zhang2013information} also $K \ll n / K$, that the number of \nodes $K$ is much smaller than the number of data point on each \node is also assumed. As we stress in the Introduction, these assumptions are far from being satisfied in our setting. Intuitively, relaxing these assumptions should make the problem harder. However, it is not as straightforward to conclude this, as there are certainly particular non-iid data distributions that simplify the problem --- for instance if data are distributed according to separability structure of the objective. Lower bounds on communication complexity of distributed convex optimization of \eqref{eq:problem} are presented in \cite{CommShamir}, concluding that for \iid data distributions, existing algorithms already achieve optimal complexity in specific settings. 

Probably first, rather extreme, work \cite{zinkevich2010parallelized} proposed to parallelize SGD in a single round of communication. Each node simply runs SGD on the data available locally, and their outputs are averaged to form a final result. This approach is however not very robust to differences in data distributions available locally, and it has been shown \cite[Appendix A]{DANE} that in general it cannot perform better than using output of a single machine, ignoring all the other data.

Shamir et al. proposed the DANE algorithm, Distributed Approximate Newton \cite{DANE}, to exactly solve a general subproblem available locally, before averaging their solutions. The method relies on similarity of Hessians of local objectives, representing their iterations as an average of inexact Newton steps. We describe the algorithm in greater detail in Section~\ref{sec:algorithms:DANE} as our proposed work builds on it. A quite similar approach was proposed in \cite{mahajan2015efficient}, with richer class class of subproblems that can be formulated locally, and solved approximately. An analysis of inexact version of DANE  and its accelerated variant, AIDE, appeared recently in \cite{AIDE}. Inexact DANE is closely related to the algorithms presented in this paper. We, however, continue in different direction shaped by the setting of \fedopt.

The DiSCO algorithm \cite{DiSCO} of Zhang and Xiao is based on inexact damped Newton method. The core idea is that the inexact Newton steps are computed by distributed preconditioned conjugate gradient, which can be very fast, if the data are distributed in an \iid fashion, enabling a good preconditioner to be computed locally. The theoretical upper bound on number of rounds of communication improves upon DANE and other methods, and in certain settings matches the lower bound presented in \cite{CommShamir}. The DiSCO algorithm is related to \cite{lin2014large, zhuang2015distributed}, a distributed truncated Newton method. Although it was reported to perform well in practice, the total number of conjugate gradient iterations may still be high to be considered a communication efficient algorithm.

Common to the above algorithms is the assumption that each \node has access to data points sampled \iid from the same distribution. This assumption is not required only in theory, but can cause the algorithms to converge significantly slower or even diverge (as reported for instance in \cite[Table 3]{DANE}). Thus, these algorithms, at least in their default form, are not suitable for the setting of Federated Optimization presented here.

An algorithm that bypasses the need for \iid data assumption is CoCoA, which provably converges under any distribution of the data, while the convergence rate does depend on properties of the data distribution. The first version of the algorithm was proposed as DisDCA in \cite{yang2013trading}, without convergence guarantees. First analysis was introduced in \cite{cocoaNIPS}, with further improvements in \cite{cocoaICML}, and a more general version in \cite{ma2015distributed}. Recently, its variant for L1-regularized objectives was introduced in \cite{smith2015l1}.

The CoCoA framework formulates general local subproblems based on the dual form of \eqref{eq:problem} (See for instance \cite[Eq.\ (2)]{ma2015distributed}). Data points are distributed to \nodes, along with corresponding dual variables. Arbitrary optimization algorithm is used to attain a relative $\Theta$ accuracy on the local subproblem --- by changing only local dual variables. These updates have their corresponding updates to primal variable $w$, which are synchronously aggregated (could be averaging, adding up, or anything in between; depending on the local subproblem formulation).

From the description in this section it appears that the CoCoA framework is the only usable tool for the setting of Federated Optimization. However, the theoretical bound on number of rounds of communications for ill-conditioned problems scales with the number of \nodes $K$. Indeed, as we will show in Section~\ref{sec:experiments} on real data, CoCoA framework does converge very slowly.

\section{Algorithms for \FedOpt}
\label{sec:algorithms}

In this section we introduce the first algorithm that was designed with the unique challenges of \fedopt in mind. Before proceeding with the explanation, we first revisit two important and at first sight unrelated algorithms. The connection between these algorithms helped to motivate our research. Namely, the algorithms are the Stochastic Variance Reduced Gradient (\SVRG) \cite{SVRG, S2GD}, a stochastic method with explicit variance reduction, and the Distributed Approximate Newton (DANE) \cite{DANE} for distributed optimization.

The descriptions are followed by their connection, giving rise to a new distributed optimization algorithm, at first sight almost identical to the \SVRG algorithm, which we call Federated \SVRG (\algname).

Although this algorithm seems to work well in practice in simple circumstances, its performance is still unsatisfactory in the general setting we specify in Section~\ref{sec:problem}. We proceed by making the FSVRG algorithm adaptive to different local data sizes, general sparsity patterns and significant differences in patterns in data available locally, and those present in the entire data set.

\subsection{Desirable Algorithmic Properties}
It is a useful thought experiment to consider the properties one would
hope to find in an algorithm for the non-\iid, unbalanced, and
massively-distributed setting we consider.  In particular:
\begin{enumerate}[(A)]
\item \label{propStartOpt} If the algorithm is initialized to the optimal solution, it stays there.
\item \label{propOneNode} If all the data is on a single \node, the algorithm should converge in $\BO(1)$ rounds of communication.
\item \label{propDisjointFeatures} If each feature occurs on a single \node, so the problems are fully decomposable (each machine is essentially learning a disjoint block of parameters), then the algorithm should converge in $\BO(1)$ rounds of communication\footnote{This is valid only for generalized linear models.}.
\item \label{propIdentical} If each \node contains an identical dataset, then the algorithm should converge in $\BO(1)$ rounds of communication. 
\end{enumerate}
For convex problems, ``converges'' has the usual technical meaning of finding a solution sufficiently close to the global minimum, but these properties also make sense for non-convex problems where ``converge'' can be read as ``finds a solution of sufficient quality''. In these statements, $\BO(1)$ round is ideally exactly one round of communication.

Property \ref{propStartOpt} is valuable in any optimization setting.
Properties \ref{propOneNode} and \ref{propDisjointFeatures} are
extreme cases of the \fedopt setting (non-\iid, unbalanced, and sparse),
whereas \ref{propIdentical} is an extreme case of the classic
distributed optimization setting (large amounts of \iid data per
machine). Thus, \ref{propIdentical} is the least important property
for algorithms in the \fedopt setting.

\subsection{\SVRG}
\label{sec:SVRG}

The \SVRG algorithm \cite{SVRG, S2GD} is a stochastic method designed to solve problem~\eqref{eq:problem} on a single \node. We present it as Algorithm~\ref{alg:S2GD} in a slightly simplified form.

\begin{algorithm}[!h]
\begin{algorithmic}[1]
\State \textbf{parameters:} $m$ = number of stochastic steps per epoch, $h$ = stepsize
\For {$s = 0, 1, 2, \dots$}
	\State Compute and store $\grad f(w^t) = \frac{1}{n} \sum_{i=1}^n \grad f_i(w^t)$ \label{line:fullgrad}
	\Comment Full pass through data
	\State Set $w = w^t$
	\For {$t = 1$ to $m$}
		\State Pick $i \in \{ 1, 2, \dots, n \}$, uniformly at random
		\State $w = w - h \left( \grad f_i(w) - \grad f_i(w^t) + \grad f(w^t) \right) $ 
		\Comment Stochastic update \label{line:stoch_update}
	\EndFor
	\State $w^{t+1} = w$
\EndFor
\end{algorithmic}

\caption{\SVRG}
\label{alg:S2GD}
\end{algorithm}

The algorithm runs in two nested loops. In the outer loop, it computes gradient of the entire function $f$ (Line~\ref{line:fullgrad}). This constitutes for a full pass through data --- in general expensive operation one tries to avoid unless necessary. This is followed by an inner loop, where $m$ fast stochastic updates are performed. In practice, $m$ is typically set to be a small multiple (1--5) of $n$. Although the theoretically optimal choice for $m$ is a small multiple of a condition number \cite[Theorem 6]{S2GD}, this is often of the same order as $n$ in practice.

The central idea of the algorithm is to avoid using the stochastic gradients to estimate the entire gradient $\grad f(w)$ directly. Instead, in the stochastic update in Line \ref{line:stoch_update}, the algorithm evaluates two stochastic gradients, $\grad f_i(w)$ and $\grad f_i(w^t)$. These gradients are used to estimate the change of the gradient of the entire function between points $w^t$ and $w$, namely $\grad f(w) - \grad f(w^t)$. Using this estimate together with $\grad f(w^t)$ pre-computed in the outer loop, yields an unbiased estimate of $\grad f(w)$.

Apart from being an unbiased estimate, it could be intuitively clear that if $w$ and $w^t$ are close to each other, the variance of the estimate $\grad f_i(w) - \grad f_i(w^t)$ should be small, resulting in estimate of $\grad f(w)$ with small variance. As the inner iterate $w$ goes further, variance grows, and the algorithm starts a new outer loop to compute new full gradient $\grad f(w^{t+1})$ and reset the variance.

The performance is well understood in theory. For $\lambda$-strongly convex $f$ and $L$-smooth functions $f_i$, convergence results are in the form
\begin{equation}
\label{eq:S2GD:convergence}
\Exp [f(w^t) - f(w^*)] \leq c^t [f(w^0) - f(w^*)],
\end{equation}
where $w^*$ is the optimal solution, and $c = \Theta \left( \frac{1}{m h} \right) + \Theta(h)$.\footnote{See \cite[Theorem 4]{S2GD} and \cite[Theorem 1]{SVRG} for details.}

It is possible to show \cite[Theorem 6]{S2GD} that for appropriate choice of parameters $m$ and $h$, the convergence rate~\eqref{eq:S2GD:convergence} translates to the need of $$ \left( n + \BO ( L/\lambda) \right) \log(1/\epsilon) $$
evaluations of $\grad f_i$ for some $i$ to achieve $\Exp [f(w) - f(w^*)] < \epsilon$.

\subsection{Distributed Problem Formulation}
\label{sec:problem}

In this section, we introduce notation and specify the structure of the distributed version of the problem we consider \eqref{eq:problem}, focusing on the case where the $f_i$ are convex.
We assume the data $\{x_i, y_i\}_{i=1}^n$, describing functions $f_i$ are stored
across a large number of \nodes.

Let $K$ be the number of \nodes.  Let $\pp_k$ for $k \in \{1, \dots, K\}$ denote a partition of data point indices $\{1, \dots, n\}$, so $\pp_k$ is the set stored on \node $k$, and define $n_k = |\pp_k|$. That is, we assume that $\pp_k \cap \pp_l = \emptyset$ whenever $k \neq l$, and $\sum_{k=1}^K n_k = n$.
We then define local empirical loss as
\begin{equation}\label{eq:98hs98hs8}
F_k(w) \eqdef \frac{1}{n_k} \sum_{i \in \mathcal{P}_k} f_i(w),
\end{equation}
which is the local objective based on the data stored on machine $k$. We can then rephrase the objective~\eqref{eq:problem} as
\begin{equation}
\label{eq:problem:distributed}
f(w) = \sum_{k = 1}^K \frac{n_k}{n} F_k(w) 
= \sum_{k=1}^K \frac{n_k}{n} \cdot \frac{1}{n_k} \sum_{i \in \mathcal{P}_k} f_i(w).
\end{equation}

The way to interpret this structure is to see the empirical loss $f(w) = \frac1n \sum_{i=1}^n f_i(w)$ as a convex combination of the local empirical losses $F_k(w)$, available locally to \node $k$. Problem \eqref{eq:problem} then takes the simplified form
\begin{equation}
\label{eq:problem:distributed:simple}
\min_{w\in \R^d} f(w) \equiv \sum_{k=1}^K \frac{n_k}{n} F_k(w).
\end{equation}

\subsection{DANE}
\label{sec:algorithms:DANE}

In this section, we introduce a general reasoning providing stronger intuitive support for the DANE algorithm \cite{DANE}, which we describe in detail below. We will follow up on this reasoning in Appendix~\ref{sec:appendix} and draw a connection between two existing methods that was not known in the literature.

If we wanted to design a distributed algorithm for solving the above problem \eqref{eq:problem:distributed:simple}, where \node $k$ contains the data describing function $F_k$. The first, and as we shall see, a rather naive idea is to ask each node to minimize their local functions, and average the results (a variant of this idea appeared in \cite{zinkevich2010parallelized}):
$$ w_k^{t+1} = \arg \min_{w\in \R^d} F_k(w), 
\qquad w^{t+1} = \sum_{k=1}^K \frac{n_k}{n} w_k^{t+1}. $$

Clearly, it does not make sense to run this algorithm for more than one iteration as the output $w$ will always be the same. This is simply because $w_k^{t+1}$ does not depend on $t$. In other words, this method effectively performs just a single round of communication. While the simplicity is appealing, the drawback of this method is that it can't work. Indeed, there is no reason to expect that in general the solution of \eqref{eq:problem:distributed:simple} will be a weighted average of the local solutions, unless the local functions are all the same --- in which case we do not need a distributed algorithm in the first place and can instead solve the much simpler problem $\min_{w\in \R^d} F_1(w)$. This intuitive reasoning can be also formally supported, see for instance \cite[Appendix A]{DANE}.

One remedy to the above issue is to modify the local problems before each aggregation step. One of the simplest strategies would be to perturb the local function $F_k$ in iteration $t$ by a quadratic term of the form: $-( a_k^t)^T w + \tfrac{\mu}{2}\|w-w^t\|^2$ and to ask each node to solve the perturbed problem instead. With this change, the improved method then takes the form
\begin{equation}
\label{eq:alg:perturbation}
w_k^{t+1} = \arg \min_{w\in \R^d} F_k(w) - (a_k^t)^T w  + \frac{\mu}{2}\|w-w^t\|^2, \qquad w^{t+1} = \frac{1}{K}\sum_{k=1}^K w_k^{t+1}.
\end{equation}

The idea behind iterations of this form is the following. We would like each node $k\in [K]$ to use as much curvature information stored in $F_k$ as possible. By keeping the function $F_k$ in the subproblem in its entirety, we are keeping the curvature information nearly intact --- the Hessian of the subproblem is $\nabla^2 F_k + \mu I$, and we can even choose $\mu = 0$.

As described, the method is not yet well defined, since we have not described how the vectors $a_k^t$ would change from iteration to iteration, and how one should choose $\mu$.  In order to get some insight into how such a method might work, let us examine the optimality conditions. Asymptotically as $t\to \infty$, we would like $a_k^t$ to be such that the minimum of each subproblem is equal to $w^*$; the minimizer of \eqref{eq:problem:distributed:simple}. Hence, we would wish for $w^*$ to be the solution of 
$$ \nabla F_k(w) - a_k^t +\mu(w - w^t) = 0. $$

Hence, in the limit, we would ideally like to choose $a_k^t = \nabla F_k(w^*) + \mu(w^* - w^t) \approx \nabla F_k(w^*)$, since $w^* \approx w^t$.  Not knowing $w^*$ however, we cannot hope to be able to simply set $a_k^t$ to this value. Hence, the second option is to come up with an update rule which would guarantee that $a_k^t $ converges to $\nabla F_k(w^*)$ as $t \to \infty$. Notice at this point that it has been long known in the optimization community that the gradient of the objective at the optimal point is intimately related to the optimal solution of a dual problem. Here the situation is further complicated by the fact that we need to learn $K$ such gradients. In the following, we show that DANE is in fact a particular instantiation of the scheme above.

\paragraph{DANE.} We present the Distributed Approximate Newton algorithm (DANE) \cite{DANE}, as Algorithm~\ref{alg:DANE}. The algorithm was originally analysed for solving the problem of structure \eqref{eq:problem:distributed}, with $n_k$ being identical for each $k$ --- i.e., each computer has the same number of data points. Nothing prevents us from running it in our more general setting though.

\begin{algorithm}[!h]
\caption{Distributed Approximate Newton (DANE)}\label{alg:DANE}
\begin{algorithmic}[1]
\State {\bf Input:} regularizer $\mu \geq 0$, parameter $\eta$ (default: $\mu = 0, \eta = 1$)
\For {$s = 0, 1, 2, \dots $}
  \State Compute $\grad f(w^t) = \frac{1}{n} \sum_{i=1}^n \grad f_i(w^t)$ and distribute to all machines \label{line:gradient}
  \State For each \node $k \in \{1, \dots, K\}$, solve \label{line:subproblem}
  \begin{equation}
  \label{eq:DANEsubproblem}
  w_k = \argmin_{w \in \R^d} \left\{ F_k(w) - \left( \grad F_k(w^t) - \eta \grad f(w^t)  \right)^T w  + \frac{\mu}{2} \| w - w^t \|^2 \right\}
  \end{equation}
  \State Compute $w^{t+1} = \frac{1}{K}\sum_{k=1}^K w_k$ \label{line:aggregate}
\EndFor
\end{algorithmic}
\end{algorithm}

As alluded to earlier, the main idea of DANE is to form a local subproblem, dependent only on local data, and gradient of the entire function --- which can be computed in a single round of communication (Line \ref{line:gradient}). The subproblem is then solved exactly (Line \ref{line:subproblem}), and updates from individual \nodes are averaged to form a new iterate (Line \ref{line:aggregate}). This approach allows any algorithm to be used to solve the local subproblem~\eqref{eq:DANEsubproblem}. As a result, it often achieves communication efficiency in the sense of requiring expensive local computation between rounds of communication, hopefully rendering the time needed for communication insignificant (see Section~\ref{sec:relatedWork:paradigm}). Further, note that DANE belongs to the family of distributed method that operate via the quadratic perturbation trick \eqref{eq:alg:perturbation} with 
$$ a_k^t = \nabla F_k(w^t) - \eta \nabla f(w^t). $$
If we assumed that the method works, i.e., that $w^t \to w^*$ and hence $\nabla f(w^t) \to \nabla f(w^*) = 0$, then $a_k^t \to \nabla F_k(w^*)$, which agrees with the earlier discussion.

In the default setting when $\mu = 0$ and $\eta = 1$, DANE achieves desirable property
\ref{propIdentical} (immediate convergence when all local datasets are
identical), since in this case $\grad F_k(w^t) - \eta \grad f(w^t)
= 0$, and so we exactly minimize $F_k(w) = f(w)$ on each machine.
For any choice of $\mu$ and $\eta$, DANE also achieves property
\ref{propStartOpt}, since in this case $\grad f(w^t) = 0$, and
$w^t$ is a minimizer of $F_k(w) - \grad F_k(w^t)\cdot w$ as well
as of the regularization term.
Unfortunately, DANE does not achieve the more \fedopt-specific desirable properties \ref{propOneNode} and \ref{propDisjointFeatures}.

The convergence analysis for DANE assumes that the functions are twice differentiable, and relies on the assumption that each \node has access to \iid samples from the same underlying distribution. This implies that that the Hessians of $\grad^2 F_k(w)$ are similar to each other \cite[Lemma 1]{DANE}. In case of linear regression, with $\lambda = \BO(1 / \sqrt{n})$-strongly convex functions, the number of DANE iterations needed to achieve $\epsilon$-accuracy is $\BO(K \log(1/\epsilon))$. However, for general $L$-smooth loss, the theory is significantly worse, and does not match its practical performance.

The practical performance also depends on the additional local regularization parameter $\mu$. For small number of \nodes $K$, the algorithm converges quickly with $\mu = 0$. However, as reported \cite[Figure 3]{DANE}, it can diverge quickly with growing $K$. Bigger $\mu$ makes the algorithm more stable at the cost of slower convergence. Practical choice of $\mu$ remains an open question.

\subsection{\SVRG meets DANE}

As we mentioned above, the DANE algorithm can perform poorly in certain settings, even without the challenging aspects of \fedopt. Another point that is seen as drawback of DANE is the need to find the \emph{exact} minimum of~\eqref{eq:DANEsubproblem} --- this can be feasible for quadratics with relatively small dimension, but infeasible or extremely expensive to achieve for other problems. We adapt the idea from the CoCoA algorithm \cite{ma2015distributed}, in which an arbitrary optimization algorithm is used to obtain relative $\Theta$ accuracy on a locally defined subproblem. We replace the exact optimization with an approximate solution obtained by using any optimization algorithm.

Considering all the algorithms one could use to solve~\eqref{eq:DANEsubproblem}, the \SVRG algorithm seems to be a particularly good candidate. Starting the local optimization of \eqref{eq:DANEsubproblem} from point $w^t$, the algorithm automatically has access to the derivative at $w^t$, which is identical for each \node\xspace --- $\grad f(w^t)$. Hence, the \SVRG algorithm can skip the initial expensive operation, evaluation of the entire gradient (Line~3, Algorithm~\ref{alg:S2GD}), and proceed only with the stochastic updates in the inner loop.

It turns out that this modified version of the DANE algorithm is equivalent to a distributed version of \SVRG.

\begin{proposition}
\label{prop:equivalence}
Consider the following two algorithms.
\begin{enumerate}
\item Run the DANE algorithm (Algorithm~\ref{alg:DANE}) with $\eta = 1$ and $\mu = 0$, and use \SVRG (Algorithm~\ref{alg:S2GD}) as a local solver for \eqref{eq:DANEsubproblem}, running it for a single iteration, initialized at point $w^t$.
\item Run a distributed variant of the \SVRG algorithm, described in Algorithm~\ref{alg:DS2GDnaive}.
\end{enumerate}

The algorithms are equivalent in the following sense. If both start from the same point $w^t$, they generate identical sequence of iterates $\{ w^t \}$.
\end{proposition}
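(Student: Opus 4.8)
The plan is to unwind both procedures down to the level of a single inner-loop update and to check that the DANE-specific perturbation terms cancel exactly, so that what remains is the plain \SVRG recursion restricted to the data held by one \node.

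First I would rewrite the DANE subproblem \eqref{eq:DANEsubproblem} with $\eta = 1$ and $\mu = 0$ as the minimization of a function that is itself a local finite sum,
\begin{align*}
G_k(w) &\eqdef F_k(w) - \left( \grad F_k(w^t) - \grad f(w^t) \right)^T w = \frac{1}{n_k} \sum_{i \in \pp_k} g_{k,i}(w), \\
g_{k,i}(w) &\eqdef f_i(w) - \left( \grad F_k(w^t) - \grad f(w^t) \right)^T w ,
\end{align*}
so that \SVRG (Algorithm~\ref{alg:S2GD}) applies to it verbatim, with its random index drawn uniformly from $\pp_k$; note $\grad g_{k,i}(w) = \grad f_i(w) - \grad F_k(w^t) + \grad f(w^t)$.

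Next I would compute what one \SVRG epoch started at $w^t$ actually produces. Its outer-loop full gradient is $\grad G_k(w^t) = \grad F_k(w^t) - \grad F_k(w^t) + \grad f(w^t) = \grad f(w^t)$, which is exactly the vector already broadcast in Line~\ref{line:gradient} of Algorithm~\ref{alg:DANE}; hence the expensive full pass in Line~\ref{line:fullgrad} of Algorithm~\ref{alg:S2GD} is free here, as anticipated in the discussion preceding the proposition. Substituting the component gradients into the inner update in Line~\ref{line:stoch_update} of Algorithm~\ref{alg:S2GD}, the $-\grad F_k(w^t) + \grad f(w^t)$ offsets in $\grad g_{k,i}(w)$ and in $\grad g_{k,i}(w^t)$ cancel against each other, while $\grad G_k(w^t)$ contributes $\grad f(w^t)$, so the update collapses to
\[
w \;=\; w - h\left( \grad f_i(w) - \grad f_i(w^t) + \grad f(w^t) \right), \qquad i \in \pp_k \ \text{uniformly at random} .
\]
This is precisely the \SVRG stochastic update for the original objective \eqref{eq:problem}, except that the sampled index is confined to the data on \node $k$ --- i.e., exactly the inner loop that \node $k$ executes in Algorithm~\ref{alg:DS2GDnaive}.

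Finally I would close the argument over one outer iteration: after $m$ such steps \node $k$ outputs $w_k$, DANE forms $w^{t+1} = \frac{1}{K}\sum_{k=1}^{K} w_k$, and I would match this to the aggregation rule of Algorithm~\ref{alg:DS2GDnaive}; an induction on $t$ then gives that if both methods sit at the same $w^t$ and are coupled so that each \node draws the same sequence of indices, then all subsequent iterates agree. I do not expect a substantive obstacle here --- the work is essentially definition-chasing; the only points needing care are bookkeeping ones, namely that Algorithm~\ref{alg:DS2GDnaive} must be stated with the same local sampling distribution (over $\pp_k$ rather than over $\{1,\dots,n\}$) and the same averaging weights as DANE, and that ``identical sequence of iterates'' should be read under a common coupling of the randomness rather than merely in distribution.
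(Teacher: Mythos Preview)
Your proposal is correct and follows essentially the same route as the paper: both arguments decompose the DANE subproblem into per-example components carrying the linear perturbation, observe that in the \SVRG inner step the $-\grad F_k(w^t)+\grad f(w^t)$ offsets cancel between $\grad g_{k,i}(w)$ and $\grad g_{k,i}(w^t)$ while the full gradient contributes $\grad f(w^t)$, and then chain identical updates across the inner loop and the averaging step. Your explicit remark about coupling the randomness is handled in the paper as a separate remark immediately after the proof, so you have anticipated that caveat correctly.
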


\begin{proof}
We construct the proof by showing that single step of the \SVRG algorithm applied to the problem \eqref{eq:DANEsubproblem} on computer $k$ is identical to the update on Line~\ref{line:DSVRGupdate} in Algorithm~\ref{alg:DS2GDnaive}.

The way to obtain a stochastic gradient of \eqref{eq:DANEsubproblem} is to sample one of the functions composing $F_k(w) = \frac{1}{n_k} \sum_{i \in \mathcal{P}_k} f_i(w)$, and add the linear term $\grad F_k(w^t) - \eta f(w^t)$, which is known and does not need to be estimated. Upon sampling an index $i \in \mathcal{P}_k$, the update direction follows as 
$$ \left[ \grad f_i(w) - \grad F_k(w^t) - f(w^t) \right] - \left[ \grad f_i(w^t) - \grad F_k(w^t) - f(w^t) \right] + \grad f(w^t) = \grad f_i(w) - \grad f_i(w^t) + \grad f(w^t), $$
which is identical to the direction in Line~\ref{line:DSVRGupdate} in Algorithm~\ref{alg:DS2GDnaive}. The claim follows by chaining the identical updates to form identical iterate $w^{t+1}$.
\end{proof}

\begin{algorithm}[!h]
\begin{algorithmic}[1]
\State \textbf{parameters:} $m$ = \# of stochastic steps per epoch, $h$ = stepsize, data partition $\{\pp_k\}_{k=1}^K$
\For {$s = 0, 1, 2, \dots$}
	\Comment Overall iterations
	\State Compute $\grad f(w^t) = \frac{1}{n} \sum_{i=1}^n \grad f_i(w^t)$
	\For {$k = 1$ to $K$} \textbf{in parallel} over \nodes $k$
	\Comment Distributed loop
	\State Initialize: $w_k = w^t$
	\For {$t = 1$ to $m$}
		\Comment Actual update loop
		\State Sample $i \in \mathcal{P}_k$ uniformly at random
		\State $ w_k = w_k - h \left( \grad f_i(w_k) - \grad f_i(w^t) + \grad f(w^t) \right) $
		\label{line:DSVRGupdate}
		\EndFor
	\EndFor
	\State $w^{t+1} = w^t + \frac{1}{K} \sum_{k=1}^K (w_k - w^t)$
	\Comment Aggregate
\EndFor
\end{algorithmic}

\caption{naive Federated \SVRG (\algname)}
\label{alg:DS2GDnaive}
\end{algorithm}

\begin{remark}
The algorithms considered in Proposition~\ref{prop:equivalence} are inherently stochastic. The statement of the proposition is valid under the assumption that in both cases, identical sequence of samples $i \in \mathcal{P}_k$ would be generated by all \nodes $k \in \{1, 2, \dots, K\}$.
\end{remark}

\begin{remark}
In the Proposition~\ref{prop:equivalence} we consider the DANE algorithm with particular values of $\eta$ and $\mu$. The Algorithm~\ref{alg:DS2GDnaive} and the Proposition can be easily gereralized, but we present only the default version for the sake of clarity.
\end{remark}

Since the first version of this paper, this connection has been mentioned in \cite{AIDE}, which analyses an inexact version of the DANE algorithm. We proceed by adapting the above algorithm to other challenges arising in the context of \fedopt.

\subsection{Federated \SVRG}

Empirically, the Algorithm~\ref{alg:DS2GDnaive} fits in the model of distributed optimization efficiency described in Section~\ref{sec:relatedWork:paradigm}, since we can balance how many stochastic iterations should be performed locally against communication costs. However, several modifications are necessary to achieve good performance in the full \fedopt setting (Section~\ref{sec:problem}). 
Very important aspect that needs to be addressed is that the number of data points available to a given node can differ greatly from the average number of data points available to any single \node. Furthermore, this setting always comes with the data available locally being clustered around a specific pattern, and thus not being a representative sample of the overall distribution we are trying to learn.
In the Experiments section we focus on the case of L2 regularized logistic regression, but the ideas carry over to other generalized linear prediction problems.

\subsubsection{Notation}

Note that in large scale generalized linear prediction problems, the data arising are almost always sparse, for example due to bag-of-words style feature representations. This means that only a small subset of $d$ elements of vector $x_i$ have nonzero values. In this class of problems, the gradient $\grad f_i(w)$ is a multiple of the data vector $x_i$. This creates additional complications, but also potential for exploitation of the problem structure and thus faster algorithms. Before continuing, let us summarize and denote a number of quantities needed to describe the algorithm. 
\begin{itemize}[noitemsep]
\item $n$ --- number of data points / training examples / functions.
\item $\mathcal{P}_k$ --- set of indices, corresponding to data points stored on device $k$.
\item $n_k = |\mathcal{P}_k|$ --- number of data points stored on device $k$.
\item $n^j = \left| \{ i \in \{ 1, \dots, n\} : x_i^T e_j \neq 0 \} \right|$ --- the number of data points with nonzero $j^{th}$ coordinate
\item $n_k^j = \left| \{  i \in \mathcal{P}_k : x_i^T e_j \neq 0 \} \right| $ --- the number of data points stored on \node $k$ with nonzero $j^{th}$ coordinate
\item $\phi^j = n^j / n$ --- frequency of appearance of nonzero elements in $j^{th}$ coordinate
\item $\phi_k^j = n_k^j / n_k$ --- frequency of appearance of nonzero elements in $j^{th}$ coordinate on \node $k$
\item $s_k^j = \phi^j / \phi_k^j$ --- ratio of global and local appearance frequencies on \node $k$ in $j^{th}$ coordinate
\item $S_k = \text{Diag}(s_k^j)$ --- diagonal matrix, composed of $s_k^j$ as $j^{th}$ diagonal element
\item $\omega^j = \left|\{ \mathcal{P}_k : n_k^j \neq 0 \}\right|$ --- Number of \nodes that contain data point with nonzero $j^{th}$ coordinate
\item $a^j = K / \omega^j$ --- aggregation parameter for coordinate $j$
\item $A = \text{Diag}(a_j)$ --- diagonal matrix composed of $a_j$ as $j^{th}$ diagonal element
\end{itemize}

With these quantities defined, we can state our proposed algorithm as Algorithm~\ref{alg:DS2GDv7}. Our experiments show that this algorithm works very well in practice, but the motivation for the particular scaling of the updates may not be immediately clear.  In the following section we provide the intuition that lead to the development of this algorithm.

\begin{algorithm}[!h]
\begin{algorithmic}[1]
\State \textbf{parameters:} $h$ = stepsize, data partition $\{\pp_k\}_{k=1}^K$, \newline {\color{white}} \hspace{64pt} diagonal matrices $A, S_k \in \R^{d \times d}$ for $k \in \{1, \dots, K\}$ 
\For {$s = 0, 1, 2, \dots$}
	\Comment Overall iterations
	\State Compute $\grad f(w^t) = \frac{1}{n} \sum_{i=1}^n \grad f_i(w^t)$
	\For {$k = 1$ to $K$} \textbf{in parallel} over \nodes $k$
	\Comment Distributed loop
	\State Initialize: $w_k = w^t$ and $h_k = h / n_k$
	\State Let $\{ i_t \}_{t=1}^{n_k}$ be random permutation of $\mathcal{P}_k$
	\For {$t = 1, \dots, n_k$}
		\Comment Actual update loop
		\State $ w_k = w_k - h_k \left( S_k \left[ \grad f_{i_t}(w_k) - \grad f_{i_t}(w^t) \right] + \grad f(w^t) \right) $
		\EndFor
	\EndFor
	\State $w^t = w^t + A \sum_{k=1}^K \frac{n_k}{n} (w_k - w^t)$
	\Comment Aggregate
\EndFor
\end{algorithmic}
\caption{Federated \SVRG (\algname)}
\label{alg:DS2GDv7}
\end{algorithm}

\subsubsection{Intuition Behind \algname Updates}
\label{sec:algorithms:intuition}

The difference between the Algorithm~\ref{alg:DS2GDv7} and Algorithm~\ref{alg:DS2GDnaive} is in the introduction of the following properties.

\begin{enumerate}
\item Local stepsize --- $h_k = h / n_k$.
\item Aggregation of updates proportional to partition sizes --- $\frac{n_k}{n} (w_k - w^t)$
\item Scaling stochastic gradients by diagonal matrix --- $S_k$
\item Per-coordinate scaling of aggregated updates --- $A (w_k - w^t)$
\end{enumerate}

Let us now explain what motivated us to get this particular implementation. 

As a simplification, assume that at some point in time, we have for some $w$, $w_k = w$ for all $k \in [K]$. In other words, all the \nodes have the same local iterate. Although this is not exactly the case in practice, thinking about the issue in this simplified setting will give us insight into what would be meaningful to do if it was true. Further, we can hope that the reality is not too far from the simplification and it will still work in practice. Indeed, all \nodes do start from the same point, and adding the linear term $\grad F_k(w^t) - \grad f(w^t)$ to the local objective forces all \nodes to move in the same direction, at least initially.

Suppose the \nodes are about to make a single step synchronously. Denote the update direction on \node $k$ as $G_k = \grad f_i(w) - \grad f_i(w^t) + \grad f(w^t)$, where $i$ is sampled uniformly at random from $\mathcal{P}_k$.

If we had only one \node, i.e., $K = 1$, it is clear that we would have $\Exp[G_1] = \grad f(w^t)$. If $K$ is more than $1$, the values of $G_k$ are in general biased estimates of $\grad f(w^t)$. We would like to achieve the following: $\Exp \left[\sum_{k=1}^K \alpha_k G_k \right] = \grad f(w^t)$, for some choice of $\alpha_k$. This is motivated by the general desire to make stochastic first-order methods to make a gradient step in expectation.

We have 
\begin{equation*}
\Exp \left[\sum_{k=1}^K \alpha_k G_k \right] = \sum_{k=1}^K \alpha_k \frac{1}{n_k} \sum_{i \in \mathcal{P}_k} \left[ \grad f_i(w) - \grad f_i(w^t) + \grad f(w^t) \right].
\end{equation*}
By setting $\alpha_k = \frac{n_k}{n}$, we get
\begin{equation*}
\Exp \left[\sum_{k=1}^K \alpha_k G_k \right] = \frac{1}{n} \sum_{k=1}^K \sum_{i \in \mathcal{P}_k} \left[ \grad f_i(w) - \grad f_i(w^t) + \grad f(w^t) \right] = \grad f(w).
\end{equation*}

This motivates the aggregation of updates from \nodes proportional to $n_k$, the number of data points available locally (Point 2).

Next, we realize that if the local data sizes, $n_k$, are not identical, we likely don't want to do the same number of local iterations on each \node $k$. Intuitively, doing one pass through data (or a fixed number of passes) makes sense. As a result, the aggregation motivated above does not make perfect sense anymore. Nevertheless, we can even it out, by setting the stepsize $h_k$ inversely proportional to $n_k$, making sure each \node makes progress of roughly the same magnitude overall. Hence, $h_k = h / n_k$ (Point 1).

To motivate the Point 3, scaling of stochastic gradients by diagonal matrix $S_k$, consider the following example. We have $1,000,000$ data points, distributed across $K = 1,000$ \nodes. When we look at a particular feature of the data points, we observe it is non-zero only in $1,000$ of them. Moreover, all of them happen to be stored on a single \node, that stores only these $1,000$ data points. Sampling a data point from this \node and evaluating the corresponding gradient, will clearly yield an estimate of the gradient $\grad f(w)$ with $1000$-times larger magnitude. This would not necessarily be a problem if done only once. However, repeatedly sampling and overshooting the magnitude of the gradient will likely cause the iterative process to diverge quickly.

Hence, we scale the stochastic gradients by a diagonal matrix. This can be seen as an attempt to enforce the estimates of the gradient to be of the correct magnitude, conditioned on us, algorithm designers, being aware of the structure of distribution of the sparsity pattern.

Let us now highlight some properties of the modification in Point 4. Without any extra information, or in the case of fully dense data, averaging the local updates is the only way that actually makes sense --- because each \node outputs approximate solution of a proxy to the overall objective, and there is no induced separability structure in the outputs such as in CoCoA \cite{ma2015distributed}. However, we could do much more in the other extreme. If the sparsity structure is such that each data point only depends on one of disjoint groups of variables, and the data were distributed according to this structure, we would efficiently have several disjoint problems. Solving each of them locally, and adding up the results would solve the problem in single iteration --- desired algorithm property (C).

What we propose is an interpolation between these two settings, on a per-variable basis. If a variable appears in data on each \node, we are going to take average. However, the less \nodes a particular variable appear on, the more we want to trust those few \nodes in informing us about the meaningful update to this variable --- or alternatively, take a longer step. Hence the per-variable scaling of aggregated updates.

\subsection{Further Notes}
\label{sec:algorithms:furthernotes}

Looking at the Proposition~\ref{prop:equivalence}, we identify equivalence of two algorithms, take the second one and try modify it to make it suitable for the setting of \fedopt. A question naturally arise: Is it possible to achieve the same by modifying the first algorithm suitable for \fedopt\ --- by only altering the local optimization objective? 

We indeed tried to experiment with idea, but we don't report the details for two reasons. First, the requirement of exact solution of the local subproblem is often impractical. Relaxing it gradually moves us to the setting we presented in the previous sections. But more importantly, using this approach we have only managed to get results significantly inferior to those reported later in the Experiments section.

\section{Experiments}
\label{sec:experiments}

In this section we present the first experimental results in the setting of \fedopt. In particular, we provide results on a dataset based on public Google+ posts\footnote{The posts were public at the time the experiment was performed, but since a user may decide to delete the post or make it non-public, we cannot release (or even permanently store) any copies of the data.}, clustered by user --- simulating each user as a independent \node. This preliminary experiment demonstrates why none of the existing algorithms are suitable for \fedopt, and the robustness of our proposed method to challenges arising there. 

\subsection{Predicting Comments on Public Google+ Posts}
\label{sec:experiments:gplus}
The dataset presented here was generated based on public Google+ posts. We randomly picked $10,000$ authors that have at least $100$ public posts in English, and try to predict whether a post will receive at least one comment (that is, a binary classification task). 

We split the data chronologically on a per-author basis, taking the earlier $75\%$ for training and the following $25\%$ for testing. The total number of training examples is $n = 2,166,693$. We created a simple bag-of-words language model, based on the $20,000$ most frequent words in dictionary based on all Google+ data. This results in a problem with dimension $d = 20,002$. The extra two features represent a bias term and variable for unknown word. We then use a logistic regression model to make a prediction based on these features.

We shape the distributed optimization problem as follows. Suppose that each user corresponds to one \node, resulting in $K = 10,000$. The average $n_k$, number of data points on \node $k$ is thus roughly $216$. However, the actual numbers $n_k$ range from $75$ to $9,000$, showing the data is in fact substantially unbalanced.

It is natural to expect that different users can exhibit very different patterns in the data generated. This is indeed the case, and hence the distribution to \nodes cannot be considered an \iid sample from the overall distribution. Since we have a bag-of-words model, our data are very sparse --- most posts contain only small fraction of all the words in the dictionary. This, together with the fact that the data are naturally clustered on a per-user basis, creates additional challenge that is not present in the traditional distributed setting. 

\begin{figure}[!h]
\centering
\includegraphics[width=0.5\textwidth]{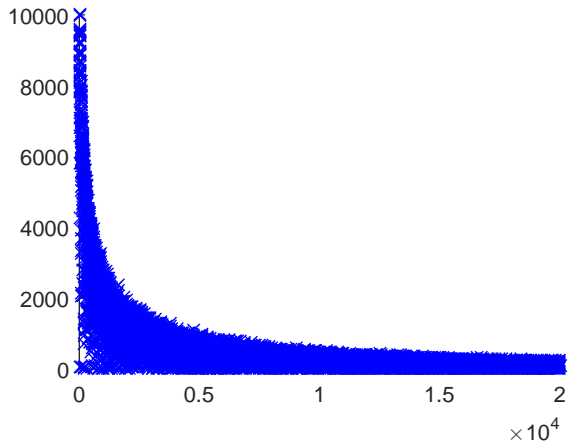}
\caption{Features vs. appearance on \nodes.  The $x$-axis is a feature index, and the $y$-axis represents the number of \nodes where a given feature is present.}
\label{fig:omegaprime}
\end{figure}

Figure~\ref{fig:omegaprime} shows the frequency of different features across \nodes. Some features are present everywhere, such as the bias term, while most features are relatively rare. In particular, over $88\%$ of features are present on fewer than $1,000$ \nodes. However, this distribution does not necessarily resemble the overall appearance of the features in data examples. For instance, while an unknown word is present in data of almost every user, it is far from being contained in every data point.

\paragraph{Naive prediction properties.} Before presenting the results, it is useful to look at some of the important basic prediction properties of the data. We use L2-regularized logistic regression, with regularization parameter $\lambda = 1/n$. We chose $\lambda$ to be the best in terms of test error in the optimal solution.
\begin{itemize} \itemsep -2pt
\item If one chooses to predict $-1$ (no comment), classification error is $\textbf{33.16}\%$. 
\item The optimal solution of the global logistic regression problem yields $\textbf{26.27}\%$ test set error. 
\item Predicting the per-author majority from the training data yields $\textbf{17.14}\%$ test error. That is, predict $+1$ or $-1$ for all the posts of an author, based on which label was more common in that author's training data. This indicates that knowing the author is actually more useful than knowing what they said, which is perhaps not surprising.
\end{itemize}

In summary, this data is representative for our motivating application in \fedopt. It is possible to improve upon naive baseline using a fixed global model. Further, the per-author majority result suggests it is possible to improve further by adapting the global model to each user individually. Model personalization is common practice in industrial applications, and the techniques used to do this are orthogonal to the challenges of \fedopt. Exploring its performance is a natural next step, but beyond the scope of this work.

While we do not provide experiments for per user personalized models, we remark that this could be a good descriptor of how far from \iid the data is distributed. Indeed, if each \node has access to an \iid sample, any adaptation to local data is merely over-fitting. However, if we can significantly improve upon the global model by per user/\node adaptation, this means that the data available locally exhibit patterns specific to the particular \node.

The performance of the Algorithm~\ref{alg:DS2GDv7} is presented below. The only parameter that remains to be chosen by user is the stepsize $h$. We tried a set of stepsizes, and retrospectively choose one that works best --- a typical practice in machine learning.

\begin{figure}[!h]
\centering
\includegraphics[width=0.48\textwidth]{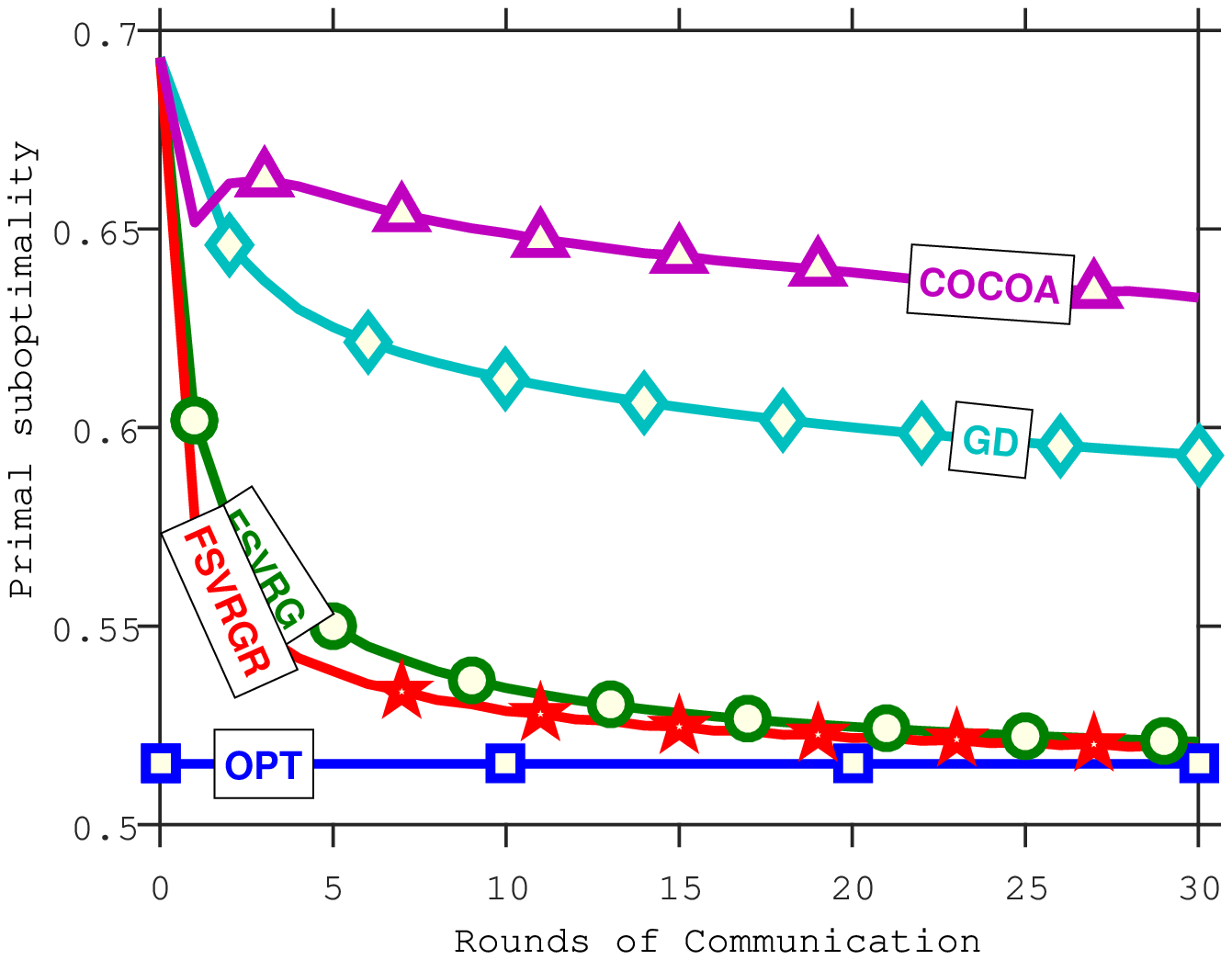}
\includegraphics[width=0.48\textwidth]{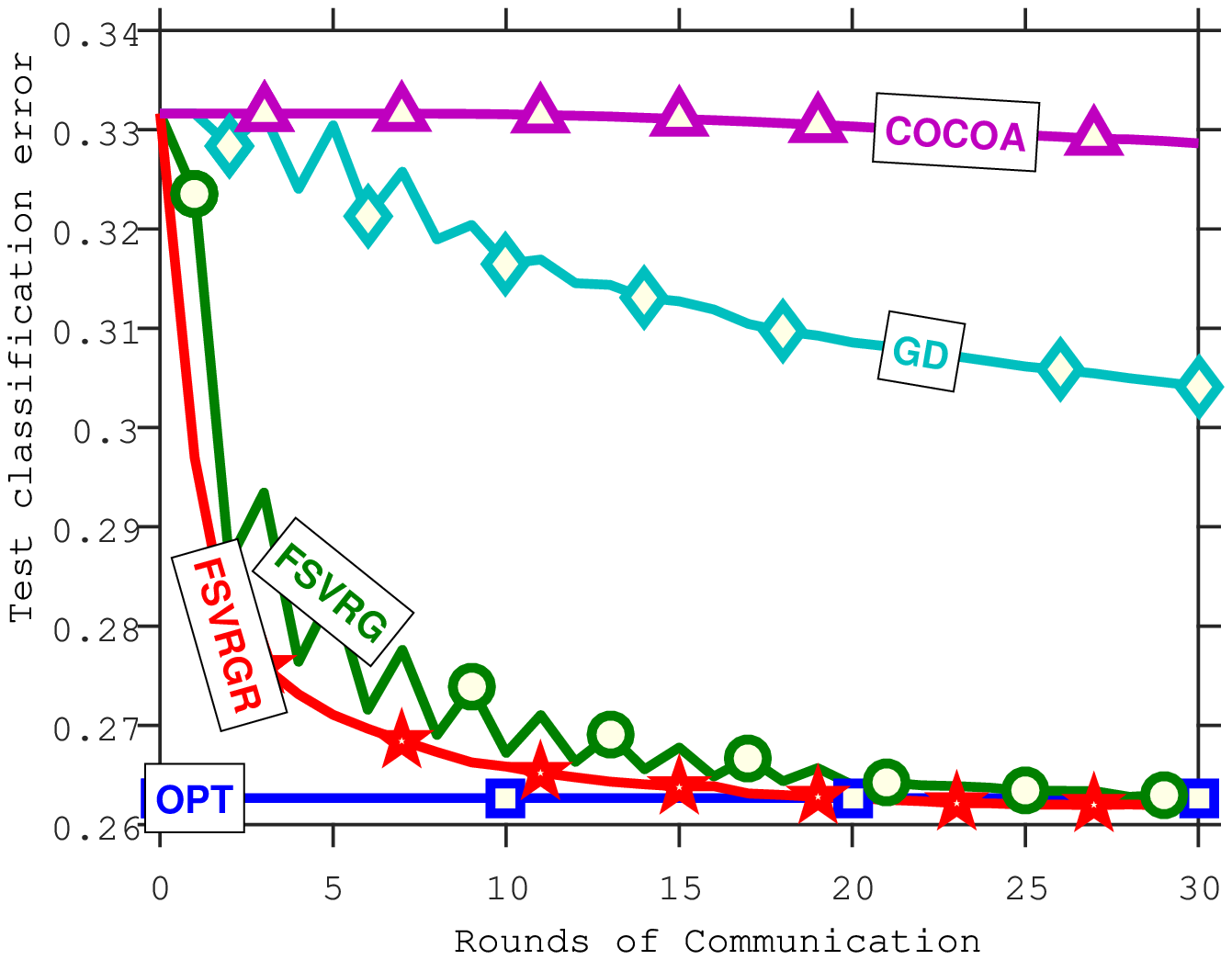}
\caption{Rounds of communication vs. objective function (left) and test prediction error (right).}
\label{fig:ex_final_001}
\end{figure}

In Figure~\ref{fig:ex_final_001}, we compare the following optimization algorithms\footnote{We thank Mark Schmidt for his \texttt{prettyPlot} function, available on his website.}:
\begin{itemize} \itemsep -2pt
 \item The blue squares (OPT) represent the best possible offline value (the optimal value of the optimization task in the first plot, and the test error corresponding to the optimum in the second plot).
 \item  The teal diamonds (GD) correspond to a simple distributed gradient descent. 
 \item The purple triangles (COCOA) are for the CoCoA+ algorithm \cite{ma2015distributed}.
\item The green circles (FSVRG) give values for our proposed algorithm.
\item The red stars (FSVRGR) correspond to the same algorithm applied to the same problem with randomly reshuffled data.  That is, we keep the unbalanced number of examples per node, but populate each node with randomly selected examples.
\end{itemize}

The first thing to notice is that CoCoA+ seems to be worse than trivial benchmark --- distributed gradient descent. This behaviour can be predicted from theory, as the overall convergence rate directly depends on the best choice of aggregation parameter $\sigma'$. For sparse problems, it is upperbounded by the maximum of the values reported in Figure~\ref{fig:omegaprime}, which is $K$, and it is close to it also in practice. Althought it is expected that the algorithm could be modified to depend on average of these quantities (which could be orders of magnitude smaller), akin to coordinate descent algorithms \cite{richtarikCD}, it has not been done yet. Note that other communication efficient algorithms fail to converge altogether.

The algorithm we propose, FSVRG, converges to optimal test classification accuracy in just $30$ iterations. Recall that in the setting of \fedopt we introduced in Section~\ref{sec:intro:challenge}, minimization of rounds of communication is the principal goal. However, concluding that the approach is stunningly superior to existing methods would not be completely fair nor correct. The conclusion is that the \emph{FSVRG is the first algorithm to tackle \fedopt}, a problem that existing methods fail to generalize to. It is important to stress that none of the existing methods were designed with these particular challenges in mind, and we formulate the first benchmark.

Since the core reason other methods fail to converge is the non-IID data distribution, we test our method on the same problem, with data randomly reshuffled among the same number of \nodes (FSVRGR; red stars). Since the difference in convergence is subtle, we can conclude that the techniques described in Section~\ref{sec:algorithms:intuition} serve its purpose and make the algorithm robust to challenges present in \fedopt.

This experiment demonstrates that learning from massively decentralized data, clustered on a per-user basis is indeed problem we can tackle in practice. Since the first version of this paper \cite{konevcny2015federated}, additional experimental results were presented in \cite{mcmahan2016federated}. We refer the reader to this paper for experiments in more challenging setting of deep learning, and a further discussion on how such system would be implemented in practice.

\section{Conclusions and Future Challenges}
\label{sec:conclusions}

We have introduced a new setting for distributed optimization, which we call \emph{\fedopt}. This setting is motivated by the outlined vision, in which users do not send the data they generate to companies at all, but rather provide part of their computational power to be used to solve optimization problems. This comes with a unique set of challenges for distributed optimization. In particular, we argue that the massively distributed, non-\iid, unbalanced, and sparse properties of \fedopt problems need to be addressed by the optimization community.

We explain why existing methods are not applicable or effective in this setting. Even the distributed algorithms that can be applied converge very slowly in the presence of large number of \nodes on which the data are stored. We demonstrate that in practice, it is possible to design algorithms that work surprisingly efficiently in the challenging setting of \fedopt, which makes the vision conceptually feasible.

We realize that it is important to scale stochastic gradients on a per-coordinate basis, differently on each \node to improve performance. To the best of our knowledge, this is the first time such per-\node scaling has been used in distributed optimization. Additionally, we use per-coordinate aggregation of updates from each \node, based on distribution of the sparsity patterns in the data.

Even though our results are encouraging, there is a lot of room for future work.  One natural direction is to consider fully asynchronous versions of our algorithms, where the updates are applied as soon as they arrive. Another is developing a better theoretical understanding of our algorithm, as we believe that development of a strong understanding of the convergence properties will drive further research in this area.

Study of the \fedopt problem for non-convex objectives is another important avenue of research. In particular, neural networks are the most important example of a machine learning tool that yields non-convex functions $f_i$, without any convenient general structure. Consequently, there are no useful results describing convergence guarantees of optimization algorithms. Despite the lack of theoretical understanding, neural networks are now state-of-the-art in many application areas, ranging from natural language understanding to visual object detection.  Such applications arise naturally in \fedopt settings, and so extending our work to such problems is an important direction.

The non-\iid data distribution assumed in \fedopt, and mobile applications in particular, suggest that one should consider the problem of training a \emph{personalized} model together with that of learning a global model.  That is, if there is enough data available on a given node, and we assume that data is drawn from the same distribution as future test examples for that node, it may be preferable to make predictions based on a personalized model that is biased toward good performance on the local data, rather than simply using the global model.

{\footnotesize
\bibliography{paper}
}

\clearpage
\appendix

\section{Distributed Optimization via Quadratic  Perturbations}
\label{sec:appendix}

This appendix follows from the discussion motivating DANE algorithm by a general algorithmic perturbation template~\eqref{eq:alg:perturbation} for $\lambda$-strongly convex objectives. We use this to propose a similar but new method, which unlike DANE converges under arbitrary data partitioning $\{\mathcal{P}_k\}_{k=1}^K$, and we highlight its relation to the dual CoCoA algorithm for distributed optimization.

For simplicity and ease of drawing the above connections we assume that $n_k$ is identical for all $k \in \{ 1, 2, \dots, K \}$ throughout the appendix. All the arguments can be simply extended, but would unnecessarily complicate the notation for current purpose.

\subsection{New Method}

We now present a new method (Algorithm~\ref{alg:PRIMAL}), which also belongs to the family of quadratic perturbation methods \eqref{eq:alg:perturbation}. However, the perturbation vectors $a_k^t$ are different from those of DANE. In particular, we set
\[a_k^t  \eqdef \nabla F_k(w^t) - (\eta \nabla F_k(w^t) +  g_k^t),\]
where $\eta>0$ is a parameter, and the vectors $g_k^t$ are maintained by the method. As we show in Lemma~\ref{lem:sum_zero}, Algorithm~\ref{alg:PRIMAL} satisfies \[\sum_{k=1}^K g_k^t = 0\] for all iterations $t$. This implies that  $\tfrac{1}{K}\sum_{k=1}^K a_k^t = (1-\eta) \nabla f(w^t)$. That is, both DANE and the new method use a linear perturbation which, when averaged over the nodes, involves the gradient of the objective function $f$ at the latest iterate $w^t$. Therefore, the methods have one more property in common beyond both being of the form  \eqref{eq:alg:perturbation}. However, as we shall see in the rest of this section, Algorithm~\ref{alg:PRIMAL} allows an insightful dual interpretation. Moreover, while DANE may not converge for arbitrary problems (even when restricted to ridge regression)---and is only known to converge under the assumption that the data stored on each node are in some precise way similar, Algorithm~\ref{alg:PRIMAL} converges for any ridge regression problem and any data partitioning.

Let us denote by $X_k$ the matrix obtained by stacking the data points $x_i$ as column vectors for all $i \in \mathcal{P}_k$. We have the following Lemma.
\begin{lemma}
\label{lem:sum_zero} 
For all $t \geq 0$ we have $\sum_{k=1}^K g_k^t = 0$.
\end{lemma}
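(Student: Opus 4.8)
The plan is to prove the identity $\sum_{k=1}^K g_k^t = 0$ by induction on $t$, using as the key structural fact that the new iterate $w^{t+1}$ produced by Algorithm~\ref{alg:PRIMAL} is the uniform average $\tfrac1K\sum_{k=1}^K w_k^{t+1}$ of the local solutions (the aggregation step in the template \eqref{eq:alg:perturbation}), so that any sum over $k$ of centered differences of the form $w_k^{t+1} - w^{t+1}$ collapses to zero. This is the same ``conservation'' phenomenon that lets one conclude $\tfrac1K\sum_k a_k^t = (1-\eta)\nabla f(w^t)$ once the lemma is in hand.

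First I would dispatch the base case: the method initializes the auxiliary vectors at $g_k^0 = 0$ for every $k$, so $\sum_{k=1}^K g_k^0 = 0$ holds trivially.

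For the inductive step I would assume $\sum_{k=1}^K g_k^t = 0$ and then substitute the update rule of Algorithm~\ref{alg:PRIMAL} for $g_k^{t+1}$. That rule expresses $g_k^{t+1}$ as $g_k^t$ plus a correction term built out of (i) the centered iterate difference $w_k^{t+1} - w^{t+1}$ and (ii) the local-minus-global gradient $\nabla F_k(w^{t+1}) - \nabla f(w^{t+1})$. Summing over $k$: the first piece contributes $\sum_{k=1}^K g_k^t = 0$ by the inductive hypothesis; the piece $\sum_{k=1}^K (w_k^{t+1} - w^{t+1})$ vanishes because $w^{t+1}$ is the average of the $w_k^{t+1}$; and the piece $\sum_{k=1}^K \bigl(\nabla F_k(w^{t+1}) - \nabla f(w^{t+1})\bigr)$ vanishes since, under the standing assumption of the appendix that all $n_k$ are equal, $\tfrac1K\sum_{k=1}^K \nabla F_k = \nabla f$. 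Hence $\sum_{k=1}^K g_k^{t+1} = 0$, closing the induction.

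I expect essentially no deep obstacle here --- the statement is a bookkeeping/conservation fact rather than an analytic one. The only point that needs genuine care is to verify that \emph{every} term appearing in the definition of $g_k^{t+1}$ is ``centered'' in the right sense (local iterates around their mean, local gradients around the global gradient), so that each survives the summation as $0$; this is precisely where the equal-partition-size assumption stated at the start of the appendix is used, and it explains why this portion of the paper is restricted to the balanced case (for general $n_k$ one would instead need the weighted identity $\sum_k \tfrac{n_k}{n}\nabla F_k = \nabla f$ together with a correspondingly weighted aggregation step).
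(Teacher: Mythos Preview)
Your inductive strategy is exactly the paper's, but both the base case and the inductive step misidentify the actual definitions in Algorithm~\ref{alg:PRIMAL}.

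\textbf{Base case.} The method does \emph{not} initialize $g_k^0 = 0$. Step~5 of Algorithm~\ref{alg:PRIMAL} sets $g_k^0 = \eta\bigl(\tfrac{K}{n}X_k\alpha_k^0 - \lambda w^0\bigr)$, with possibly nonzero $\alpha_k^0$. The sum $\sum_k g_k^0$ vanishes only because of the companion definition $w^0 = \tfrac{1}{\lambda n}\sum_k X_k\alpha_k^0$ in Step~4: then $\sum_k \tfrac{K}{n}X_k\alpha_k^0 = \tfrac{K}{n}X\alpha^0 = K\lambda w^0$, and the two pieces cancel. So the base case requires a short computation, not a triviality.

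\textbf{Inductive step.} The update rule (Step~12) is simply
\[
g_k^{t+1} = g_k^t + \lambda\eta\,(w_k^{t+1} - w^{t+1}),
\]
with no ``local-minus-global gradient'' term $\nabla F_k(w^{t+1}) - \nabla f(w^{t+1})$. Your argument for term (i) is exactly right and is all that is needed; term (ii) does not exist. As a consequence, the equal-$n_k$ assumption plays no role in this lemma --- the cancellation comes purely from $w^{t+1}$ being the uniform average of the $w_k^{t+1}$.

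Once you correct these two specifics, your proof coincides with the paper's.
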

\begin{proof}
The statement holds for $t=0$. Indeed,
\[\sum_{k=1}^K g_k^t = \eta\sum_{k=1}^K \left(\frac{K}{n}X_k \alpha_k^0 - \lambda w^0\right) =  0,\]
where the last step follows from the definition of $w^0$. Assume now that the statement hold for $t$. Then
\[\sum_{k=1}^K g_k^{t+1} = \sum_{k=1}^K \left(g_k^t +\eta \lambda(w_k^{t+1} - w^{t+1}) \right)\\
= \eta \lambda \sum_{k=1}^K (w_k^{t+1} - w^{t+1}).\]
The first equation follows from the way $g_k$ is updated in the algorithm. The second equation follows from the inductive assumption, and the last equation follows from the definition of $w^{t+1}$ in the algorithm.
\end{proof}

\begin{algorithm}[t]
\begin{algorithmic}[1]
\State \textbf{Input:} $\sigma \in [1,K]$
\State \textbf{Choose:} $\alpha_k^0\in \R^{|{\cal P}_k|}$ for $k=1,2,\dots,K$ 
\State \textbf{Set:} $\eta = \frac{K}{\sigma}, \; \mu = \lambda (\eta-1)$
\State \textbf{Set:} $w^0 = \frac{1}{\lambda n} \sum_{k=1}^K X_k \alpha_k^0$
\State \textbf{Set:} $g_k^0 = \eta (\frac{K}{n} X_k \alpha_k^0 - \lambda w^0)$ for $k=1,2,\dots,K$
\For {$t = 0,1,2,\dots$}
	 \For {$k = 1$ \textbf{to} $K$} 
	      \State	   $w^{t+1}_k =\arg \min_{w\in \R^d} F_k(w) -  \left( \nabla F_k(w^t)  - (\eta \nabla F_k(w^t)  +   g_k^t)\right)^T w + \frac{\mu}{2}\|w-w^t\|^2 $ 
	\EndFor	
	\State 	 $w^{t+1} = \frac{1}{K} \sum_{k=1}^K w^{t+1}_k$	 	 
	 \For {$k = 1$ \textbf{to} $K$} 
		  \State  $g_k^{t+1} = g_k^t + \lambda \eta (w_k^{t+1} - w^{t+1})$	 
	 \EndFor  	  
\State \textbf{return} $w^t$
\EndFor
\end{algorithmic}
\caption{Primal Method}
\label{alg:PRIMAL}

\end{algorithm}

\subsection{L2-Regularized Linear Predictors}

In the rest of this section we consider the case of L2-regularized {\em linear predictors}.  That is, we focus on problem \eqref{eq:problem}  with $f_i$ of the form
\[f_i(w) = \phi_i(x_i^T w) + \frac{\lambda}{2}\|w\|^2,\]
where $\lambda>0$ is a regularization parameter. 
This leads to L2 regularized empirical risk minimization (ERM) problem
\begin{equation} \label{eq:ERM-linear-pred} \min_{w\in \R^d} \left\{ f(w)\eqdef \frac{1}{n}\sum_{i=1}^n\phi_i(x_i^T w) + \frac{\lambda}{2}\|w\|^2 \right\}.\end{equation}
We assume that the loss functions $\phi_i:\R  \to \R$ are convex and  $1/\gamma$-smooth for some $\gamma>0$; these are standard assumptions.   As usual, we allow the loss function $\phi_i$ to depend on the label $y_i$. For instance, we may choose the quadratic loss: $\phi_i(t) = \tfrac{1}{2}(t-y_i)^2$ (for which $\gamma=1$).

Let $X = [x_1,\dots,x_n]\in \R^{d\times n}$. As described in Section~\ref{sec:problem}, we assume that the data $(x_i,y_i)_{i=1}^n$ is distributed among $K$ nodes of a computer cluster as follows: node $k=1,2,\dots,K$ contains pairs $(x_i,y_i)$ for $i\in {\cal P}_k$, where ${\cal P}_1,\dots,{\cal P}_K$ forms a partition of the set $[n]=\{1,2,\dots,n\}$. Letting $X = [X_1,\dots,X_K]$, where $X_k\in \R^{d\times |{\cal P}_k|}$ is a submatrix of $A$ corresponding to columns $i\in {\cal P}_k$, and $y_k\in \R^{|{\cal P}_k|}$ is the subvector of $y$ corresponding to entries $i\in {\cal P}_k$. Hence, node $k$ contains the pair $(X_k,y_k)$. With this notation, we can write the problem in the form \eqref{eq:problem:distributed:simple}, where 
\begin{equation}
\label{eq:iuhd9898g9} 
F_k(w) = \frac{K}{n}\sum_{i\in {\cal P}_k} \phi_i(x_i^T w) + \frac{\lambda}{2}\|w\|^2.
\end{equation}

\subsection{A Dual Method: Dual Block Proximal Gradient Ascent}

The dual of \eqref{eq:ERM-linear-pred} is the problem

\begin{equation}
\label{eq:dual_9878979s8}
\max_{\alpha\in \R^n} \left\{ D(\alpha) \eqdef - \frac{1}{2\lambda n^2}\left\| X \alpha\right\|^2 - \frac{1}{n}\sum_{i=1}^n\phi_i^*(-\alpha_i ) \right\},
\end{equation}
where $\phi_i^*$ is the convex conjugate of $\phi_i$. Since we assume that $\phi_i$ is $1/\gamma$ smooth, it follows that $\phi_i^*$ is $\gamma$ strongly convex. Therefore, $D$ is a strongly concave function.

\paragraph{From dual solution to a primal solution.} It is well known that if $\alpha^*$ is the optimal solution of the dual problem \eqref{eq:ERM-linear-pred}, then
$w^* \eqdef \frac{1}{\lambda n} X \alpha^*$
is the optimal solution of the primal problem. Therefore, for any dual algorithm producing a sequence of iterates $\alpha^t$, we can define a corresponding primal algorithm  via the linear mapping  \begin{equation}\label{eq:primal_from_dual}w^t \eqdef \frac{1}{\lambda n}X \alpha^t.\end{equation} Clearly, if $\alpha^t\to \alpha^*$, then $w^t \to w^*$.
We shall now design a method for maximizing the dual function $D$ and then in Theorem~\ref{thm:equiv} we claim that for quadratic loss functions, Algorithm~\ref{alg:PRIMAL} arises as an image, defined via   \eqref{eq:primal_from_dual},  of dual iterations of this  dual ascent method.

\paragraph{Design of the dual gradient ascent method.} Let $\xi(\alpha) \eqdef \tfrac{1}{2}\|X\alpha\|^2$. Since $\xi$ is a convex quadratic, we have
\[\xi(\alpha + h) = \xi(\alpha) + \langle \nabla \xi(\alpha), h\rangle + \frac{1}{2}h^T \nabla^2 \xi(\alpha)h,\\
\leq  \xi(\alpha) + \langle \nabla \xi(\alpha), h\rangle + \frac{\sigma}{2}\|h\|_B^2,
\]
where $\nabla \xi (\alpha)  = X^T X \alpha $ and $\nabla^2 \xi (\alpha) = X^T X $. Further, we define the block-diagonal matrix $B\eqdef Diag(X_1^T X_1, \dots,X_{K}^T X_K)$, and a norm associate with this matrix: \[\|h\|_B^2 \eqdef \sum_{k=1}^K \|X_k h_k\|^2.\] By $\sigma$ we refer to a large enough constant for which  $X^TX\preceq \sigma B$. In order to avoid unnecessary  technicalities, we shall assume that the matrices $X_k^T X_k$ are positive definite, which implies that $\|\cdot\|_B$ is a norm. It can be shown that $1\leq \sigma \leq K$. Clearly, $\xi$ is $\sigma$-smooth with respect to the norm $\|\cdot \|_B$. In view of the above, for all $h\in \R^n$ we can estimate $D$ from below as follows:
\begin{eqnarray*}
D(\alpha^t+h) &\geq &  -\frac{1}{\lambda n^2}\left(\xi (\alpha^t) + \langle \nabla \xi (\alpha^t), h \rangle + \frac{\sigma}{2}\sum_{k=1}^K \|X_k h_k\|^2\right) - \frac{1}{n}\sum_{i=1}^n \phi_i^*(-\alpha_i^t-h_i)\\
&=& -\frac{1}{\lambda n^2} \xi(\alpha^t) - \sum_{k=1}^K \left[ \frac{1}{\lambda n^2} \langle \nabla_k \xi (\alpha^t),h_k\rangle +  \frac{\sigma}{2\lambda n^2}\|X_k h_k\|^2 + \frac{1}{n}\sum_{i\in {\cal P}_k} \phi_i^*(-\alpha_i^t - h_i) \right] ,
\end{eqnarray*}
 where  $\nabla_k \xi (\alpha^t)$ corresponds to the subvector of $\nabla \xi(\alpha^t)$ formed by entries $i\in {\cal P}_k$. 

We  now let $h^t = (h^t_1,\dots,h^t_K)$  be the maximizer of this lower bound. Since the lower bound is separable in the blocks $\{h^t_k\}_k$, we can simply set
\begin{equation}\label{eq:DUAL-eq} h^t_k := \arg\min_{u\in \R^{|{\cal P}_k|}} \left\{D_k^t(u) \eqdef \frac{1}{\lambda n^2} \langle \nabla_k \xi (\alpha^{t}),u\rangle +  \frac{\sigma}{2\lambda n^2}\|X_k u\|^2 + \frac{1}{n}\sum_{i\in {\cal P}_k} \phi_i^*(-\alpha_i^t - u_i) \right\}.\end{equation}
Having computed $h_k^t$ for all $k$, we can set $\alpha_k^{t+1} = \alpha_k^t + h_k^t$ for all $k$, or equivalently, $\alpha^{t+1} = \alpha^t + h^t$.  This is formalized as Algorithm~\ref{alg:DUAL}.  Algorithm~\ref{alg:DUAL}  is a proximal gradient ascent method applied to the dual problem, with smoothness being measured using the block norm $\|h\|_B$. It is known that gradient ascent converges at a linear rate for smooth and strongly convex (for minimization problems) objectives.

\begin{algorithm}[t]
\begin{algorithmic}[1]
\State \textbf{Input:} $\sigma \in [1,K]$
\State \textbf{Choose:} $\alpha_k^0\in \R^{|{\cal P}_k|}$ for $k=1,2,\dots,K$
\For {$t = 0,1,2,\dots$}
	 \For {$k = 1$ \textbf{to} $K$} 
	 		\State	   $h^{t+1}_k =\arg\min_{u\in \R^{|{\cal P}_k|}}   D_k^t(u)$
	 		\Comment See \eqref{eq:DUAL-eq}
	 \EndFor
	 \State	 $\alpha^{t+1} = \alpha^t + h^t$
\EndFor	 	
\State \textbf{return} $w^t$
\end{algorithmic}
\caption{Dual Method}
\label{alg:DUAL}

\end{algorithm}

One of the main insights of this section is the following equivalence result.

\begin{theorem}[Equivalence of Algorithms \ref{alg:PRIMAL} and \ref{alg:DUAL} for Quadratic Loss] \label{thm:equiv} Consider the ridge regression problem. That is, set  $\phi_i(t) = \tfrac{1}{2}(t-y_i)^2$ for all $i$. Assume $\alpha_1^0, \dots, \alpha_K^0$ is chosen in the same way in Algorithms~\ref{alg:PRIMAL} and \ref{alg:DUAL}. Then the dual iterates $\alpha^t$ and the primal iterates $w^t$ produced by the two algorithms are related via \eqref{eq:primal_from_dual} for all $t\geq 0$. 
\end{theorem}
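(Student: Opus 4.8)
The plan is to prove, by induction on $t$, a statement slightly stronger than the claim: for every $t \ge 0$ the iterates of Algorithm~\ref{alg:PRIMAL} satisfy both
\[
w^t = \frac{1}{\lambda n}\sum_{k=1}^K X_k\alpha_k^t = \frac{1}{\lambda n}X\alpha^t
\qquad\text{and}\qquad
g_k^t = \eta\Bigl(\tfrac{K}{n}X_k\alpha_k^t - \lambda w^t\Bigr)\ \text{ for all } k,
\]
where $\alpha^t$ is the dual iterate of Algorithm~\ref{alg:DUAL}. The base case $t=0$ is immediate: the left identity is the very definition of $w^0$ in Algorithm~\ref{alg:PRIMAL}, the right one is the very definition of $g_k^0$, and $\alpha^0$ is chosen identically in the two algorithms by assumption. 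Carrying the second ($g_k^t$) identity along the induction is what makes the argument go through, since it is exactly the bridge between the perturbation vectors $a_k^t$ of the primal method and the dual variables.

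For the inductive step, assume both identities hold at $t$. First I would record the consequence $\nabla_k\xi(\alpha^t) = X_k^T X\alpha^t = \lambda n\,X_k^T w^t$, which removes $\alpha^t$ from the dual block subproblem \eqref{eq:DUAL-eq}. For ridge regression $\phi_i(t)=\tfrac12(t-y_i)^2$ the conjugate is $\phi_i^*(a)=\tfrac12 a^2 + a y_i$, so $D_k^t(\cdot)$ is a strongly convex quadratic in $u$ (the $\tfrac1n\sum\phi_i^*$ term contributes $\tfrac1n I$ to its Hessian) and its unique minimizer solves
\[
\Bigl(\tfrac{\sigma}{\lambda n}X_k^T X_k + I\Bigr)h_k^t = y_k - \alpha_k^t - X_k^T w^t.
\]
On the primal side, I would expand the Algorithm~\ref{alg:PRIMAL} subproblem using the explicit form \eqref{eq:iuhd9898g9} of $F_k$, substitute $g_k^t$ from the induction hypothesis into the perturbation $a_k^t = \nabla F_k(w^t) - (\eta\nabla F_k(w^t)+g_k^t)$, and use $\mu = \lambda(\eta-1)$ so that $\lambda+\mu = \lambda\eta$ and the constant-in-$w$ terms cancel; what survives is that $\Delta_k \eqdef w_k^{t+1}-w^t$ is the unique solution of
\[
\Bigl(\tfrac{K}{n}X_k X_k^T + \lambda\eta\,I\Bigr)\Delta_k = \eta\,\tfrac{K}{n}X_k\bigl(y_k - \alpha_k^t - X_k^T w^t\bigr).
\]
Both subproblems are genuinely well posed (unique minimizers) because of the $\lambda>0$ and $\mu\ge 0$ terms, the latter being nonnegative since $\sigma\le K$ forces $\eta=K/\sigma\ge 1$.

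The crux is then the single algebraic identity $\Delta_k = \tfrac{K}{\lambda n}X_k h_k^t$. To verify it, substitute this candidate into the primal linear system, factor $X_k$ out on the left, and use $\eta\sigma = K$ to rewrite $\tfrac{K}{\lambda n\eta}X_k^T X_k + I$ as $\tfrac{\sigma}{\lambda n}X_k^T X_k + I$; the primal equation then reduces to $X_k$ applied to the dual equation for $h_k^t$, which holds by construction, so by uniqueness the candidate is correct. Everything else is mechanical: averaging gives $w^{t+1} = w^t + \tfrac1K\sum_k \Delta_k = \tfrac1{\lambda n}X\alpha^t + \tfrac1{\lambda n}\sum_k X_k h_k^t = \tfrac1{\lambda n}X(\alpha^t+h^t) = \tfrac1{\lambda n}X\alpha^{t+1}$, which is the left identity at $t+1$; and since $\tfrac{K}{n}X_k h_k^t = \lambda\Delta_k = \lambda(w_k^{t+1}-w^t)$, plugging this and $\alpha_k^{t+1}=\alpha_k^t+h_k^t$ into the update $g_k^{t+1} = g_k^t + \lambda\eta(w_k^{t+1}-w^{t+1})$ and using the induction hypothesis for $g_k^t$ reproduces $g_k^{t+1} = \eta(\tfrac{K}{n}X_k\alpha_k^{t+1}-\lambda w^{t+1})$. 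This closes the induction, and in particular $w^t = \tfrac1{\lambda n}X\alpha^t$ for all $t$, i.e. \eqref{eq:primal_from_dual}.

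I expect the only real obstacle to be bookkeeping: pushing the substitution of $g_k^t$ through the primal normal equations while keeping the constants $\lambda,\eta,\mu,\sigma$ under control via the two relations $\mu = \lambda(\eta-1)$ and $\eta\sigma = K$, so that after left-multiplication by $X_k$ the primal system and the dual system become literally the same equation. A secondary point worth stating explicitly is the well-posedness of the two block subproblems, since it is what licenses the ``by uniqueness'' step; this is where $\lambda>0$ (and, for the norm $\|\cdot\|_B$ defining $\sigma$, positive-definiteness of the blocks $X_k^T X_k$) enters.
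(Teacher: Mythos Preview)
Your proof is correct and follows essentially the same approach as the paper: both arguments carry by induction the auxiliary identity $g_k^t = \eta\bigl(\tfrac{K}{n}X_k\alpha_k^t - \lambda w^t\bigr)$ alongside \eqref{eq:primal_from_dual}, and both hinge on matching the block subproblems via $w_k^{t+1}-w^t = \tfrac{K}{\lambda n}X_k h_k^t$. Your execution is in fact somewhat cleaner than the paper's: you compare first-order optimality conditions (normal equations) directly, whereas the paper first casts the dual block step \eqref{eq:xD} back into primal form \eqref{eq:xP}, performs a change of variables, and then rewrites the resulting objective $L_k(w)$ through a lengthy chain of algebraic identities before the same $g_k^t$ claim emerges.
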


Since the dual method converges linearly, in view of the above theorem, so does the primal method. Here we only remark that the popular algorithm CoCoA+ \cite{ma2015distributed} arises if Step 5 in Algorithm~\ref{alg:DUAL} is done inexactly. Hence, we show that duality provides a deep relationship between the CoCoA+ and DANE algorithms, which were previously considered completely different.

\subsection{Proof of Theorem~\ref{thm:equiv} }

In this part we prove the theorem.

\paragraph{Primal and Dual Problems.} Since $\phi_i(t) = \tfrac{1}{2}(t-y_i)^2$, the primal problem \eqref{eq:ERM-linear-pred} is a ridge regression problem of the  form 
\begin{equation}\label{eq:P}\min_{w\in \R^d} f(w) = \frac{1}{2n}\|X^T w-y\|^2 + \frac{\lambda}{2}\|w\|^2,\end{equation}
where $X\in \R^{d\times n}$ and $y\in \R^n$. In view of \eqref{eq:dual_9878979s8}, the dual of \eqref{eq:P} is 
\begin{equation}\label{eq:D} \min_{\alpha\in \R^n} D(\alpha) =  \frac{1}{2\lambda n^2}\|X\alpha\|^2 + \frac{1}{2n}\|\alpha\|^2 -\frac{1}{n}y^T \alpha . \end{equation}

\paragraph{Primal Problem: Distributed Setup.} The primal objective function is of the form \eqref{eq:problem:distributed:simple}, where  in view of \eqref{eq:iuhd9898g9}, we have
$F_k(w) = \frac{K}{2n}\|X_k^T w - y_k\|^2 + \frac{\lambda}{2}\|w\|^2$. Therefore, 
\begin{equation} \label{eq:DF_k} \nabla F_k(w) = \frac{K}{n}X_k(X_k^T w - y_k) + \lambda w\end{equation}
and
$\nabla f(w) = \frac{1}{K}\sum_k \nabla F_k(w) = \frac{1}{K}\sum_k  \left(\frac{K}{n}X_k(X_k^T w - y_k) + \lambda w \right).$

\paragraph{Dual Method.}  Since $D$ is a quadratic, we have
\begin{eqnarray*}D(\alpha^t + h) &=& D(\alpha^t) + \nabla D(\alpha^t)^T h + \frac{1}{2}h^T \nabla^2 D(\alpha^t)h,\end{eqnarray*}
with \[\nabla D(\alpha^t)  = \frac{1}{\lambda n^2}X^T X \alpha^t + \frac{1}{n}(\alpha^t - y), \qquad \nabla^2 D(\alpha^t) = \frac{1}{\lambda n^2}X^T X + \frac{1}{n}I.\] 
We know that  $X^T X\preceq \sigma Diag(X_1^T X_1, \dots,X_{K}^T X_K)$. With this approximation, for all $h\in \R^n$ we can estimate $D$ from above by a node-separable quadratic function as follows:
\begin{eqnarray*}
D(\alpha^t+h) &\leq&  D(\alpha^t) + \left(\frac{1}{\lambda n^2}X^T X \alpha^t + \frac{1}{n}(\alpha^t - y)\right)^T h + \frac{1}{2n}\|h\|^2 + \frac{\sigma}{2\lambda n^2}\sum_{k=1}^K \|X_k h_k\|^2\\
&=& D(\alpha^t) + \frac{1}{n} \left[  \frac{1}{\lambda n}(X\alpha^t)^T X h + (\alpha^t - y)^T h + \frac{1}{2}\|h\|^2 + \frac{\sigma}{2\lambda n}\sum_{k=1}^K \|X_k h_k\|^2  \right]\\
&=&D(\alpha^t) + \frac{1}{n} \sum_{k=1}^K \left( (w^t)^T X_k h_k + (\alpha^t_k - y_k)^T h_k+ \frac{1}{2}\|h_k\|^2 + \frac{\sigma}{2\lambda n}\|X_k h_k\|^2  \right).
\end{eqnarray*}

Next, we shall define 
\begin{equation}\label{eq:xD}h_k^t \eqdef \arg \min_{h_k \in \R^{|{\cal P }_k|}}  \frac{\sigma}{2\lambda n}\|X_k h_k\|^2     + \frac{1}{2}\|h_k\|^2 - ( y_k - X_k^T w^t - \alpha^t_k )^T h_k \end{equation}
for $k=1,2,\dots,K$ and then set \begin{equation}
\label{eq:xxx}\alpha^{t+1} = \alpha^t + h^t.\end{equation}

\paragraph{Primal Version of the Dual Method.} Note that \eqref{eq:xD} has the same form as \eqref{eq:D}, with $X$ replaced by $X_k$, $\lambda$ replaced by $\lambda/\sigma$ and $y$ replaced by $c_k:=y_k - X_k^T w^t - \alpha^t_k$. Hence, we know that 
\begin{equation}\label{eq:s}s_k^t \eqdef \frac{1}{(\lambda/\sigma ) n}X_k h_k^t\end{equation}
is the optimal solution of the primal problem of \eqref{eq:xP}:
\begin{equation}\label{eq:xP}s_k^t =\arg \min_{s\in \R^d}  \frac{1}{2n}\|X_k^T s-c_k\|^2 + \frac{\lambda/\sigma}{2}\|s\|^2.\end{equation}

Hence, the primal version of  method \eqref{eq:xxx} is given by
\begin{eqnarray*}w^{t+1}&\overset{\eqref{eq:primal_from_dual}}{=}& \frac{1}{\lambda n}X \alpha^{t+1} \overset{\eqref{eq:xxx}}{=}\frac{1}{\lambda n} X (\alpha^t + h^t) \overset{\eqref{eq:primal_from_dual}}{=} w^t + \frac{1}{\lambda n} \sum_{k=1}^K X_k h^t_k\\
& = &\frac{1}{K} \sum_{k=1}^K \left( w^t + \frac{K}{\sigma}\frac{\sigma}{\lambda n}  X_k h^t_k \right) \overset{\eqref{eq:s}}{=} \frac{1}{K} \sum_{k=1}^K \left( w^t + \frac{K}{\sigma}s_k^t \right).
\end{eqnarray*}

With the change of variables $w := w^t + \frac{K}{\sigma}s$ (i.e., $s = \frac{\sigma}{K}(w-w^t)$), from \eqref{eq:xP} we know that $w_k^{t+1}:=w^t + \frac{K}{\sigma}s_k^t$ solves
\begin{equation}\label{eq:xPnew}w_k^{t+1} =\arg \min_{w\in \R^d} \left\{L_k(w) \eqdef \frac{1}{2n}\left\|X_k^T \frac{\sigma}{K}(w-w^t)-c_k \right\|^2 + \frac{\lambda/\sigma}{2} \left\|\frac{\sigma}{K}(w-w^t) \right\|^2 \right\}\end{equation}
and $w^{t+1} = \frac{1}{K}\sum_{k=1}^K w_k^{t+1}$.

Let us now rewrite the function in \eqref{eq:xPnew} so as to connect it to Algorithm~\ref{alg:PRIMAL}:
\begin{eqnarray*} L_{k}(w) &=& \frac{1}{2n}\left\|X_k^T \frac{\sigma}{K}(w-w^t)-c_k \right\|^2 + \frac{\lambda/\sigma}{2} \left\|\frac{\sigma}{K}(w-w^t) \right\|^2\\
&=& \frac{1}{2n}\frac{\sigma^2}{K^2}\left\|  (X_k^T w-y_k)  - \underbrace{\left(X_k^T w^t -y_k + \frac{K}{\sigma} c_k\right)}_{d_k} \right\|^2 + \frac{\lambda \sigma^2}{2K^3}\|w\|^2 - \frac{\lambda \sigma^2}{2K^3}\|w\|^2 + \frac{\lambda/\sigma}{2} \left\|\frac{\sigma}{K}(w-w^t) \right\|^2\\
&=& \frac{1}{2n}\frac{\sigma^2}{K^2} \left( \left\| X_k^T w - y_k\right\|^2 +  \|d_k\|^2 - 2 (X_k^T w - y_k)^T d_k\right) + \frac{\lambda \sigma^2}{2K^3}\|w\|^2 - \frac{\lambda \sigma^2}{2K^3}\|w\|^2 + \frac{\lambda\sigma}{2 K^2} \left\| w-w^t \right\|^2\\
&=&\frac{\sigma^2}{K^3} \left(\frac{K}{2n}  \left\| X_k^T w - y_k\right\|^2 +  \frac{K}{2n}\|d_k\|^2 - \frac{K}{n} (X_k^T w - y_k)^T d_k\right) + \frac{\lambda \sigma^2}{2K^3}\|w\|^2 \\
&& \qquad - \frac{\lambda \sigma^2}{2K^3}\|w\|^2 + \frac{\lambda\sigma}{2 K^2} \left\| w-w^t \right\|^2\\ 
&=&\frac{\sigma^2}{K^3} \underbrace{\left(\frac{K}{2n}  \left\| X_k^T w - y_k\right\|^2 + \frac{\lambda }{2}\|w\|^2 \right)}_{F_k(w)} + \frac{\sigma^2}{K^3} \left(  \frac{K}{2n}\|d_k\|^2 - \frac{K}{n} (X_k^T w - y_k)^T d_k\right)  \\
&& \qquad - \frac{\lambda \sigma^2}{2K^3}\|w\|^2 + \frac{\lambda\sigma}{2 K^2} \left\| w-w^t \right\|^2\\
&=&\frac{\sigma^2}{K^3} F_k(w) - \frac{\sigma^2}{K^2 n}  (X_k^T w - y_k)^T d_k + \frac{\sigma^2}{2n K^2} \|d_k\|^2  - \frac{\lambda \sigma^2}{2K^3}\|w\|^2 + \frac{\lambda\sigma}{2 K^2} \left\| w-w^t \right\|^2\\
&=&\frac{\sigma^2}{K^3} F_k(w) - \frac{\sigma^2}{K^2 n}  (X_k d_k)^T w     - \frac{\lambda \sigma^2}{2K^3}\|w\|^2 + \frac{\lambda\sigma}{2 K^2} \left\| w-w^t \right\|^2 + \underbrace{\left( \frac{\sigma^2}{2n K^2} \|d_k\|^2+  \frac{\sigma^2}{K^2 n}  y_k^T d_k\right)}_{\beta_1}.
\end{eqnarray*}

Next, since $\|w\|^2 = \|w-w^t\|^2 - \|w^t\|^2 + 2(w^t)^T w$, we can further write 

\begin{eqnarray*} L_{k}(w) &=& \frac{\sigma^2}{K^3} F_k(w) - \frac{\sigma^2}{K^2 n}  (X_k d_k)^T w     - \frac{\lambda \sigma^2}{2K^3} ( \|w-w^t\|^2 - \|w^t\|^2 + 2(w^t)^T w) + \frac{\lambda\sigma}{2 K^2} \left\| w-w^t \right\|^2 + \beta_1\\
&=&\frac{\sigma^2}{K^3} F_k(w) - \frac{\sigma^2}{K^2 n}  (X_k d_k)^T w  - \frac{\lambda \sigma^2}{K^3}   (w^t)^T w + \left(\frac{\lambda\sigma}{2 K^2} - \frac{\lambda \sigma^2}{2K^3} \right) \left\| w-w^t \right\|^2     + \underbrace{ \frac{\lambda \sigma^2}{2K^3} \|w^t\|^2 + \beta_1}_{\beta_2}\\
&=& \frac{\sigma^2}{K^3} \left(F_k(w) - \left(\frac{K}{n} X_k d_k + \lambda w^t \right)^T w  + \frac{\lambda}{2}\left(\frac{K}{\sigma}-1\right)\|w-w^t\|^2 \right) + \beta_2\\
&=& \frac{\sigma^2}{K^3} \left(F_k(w) - \left(\nabla F_k(w^t) - \frac{K^2}{\sigma n} X_k ( X_k^T w^t -y_k + \alpha_k^t)\right)^T w  + \frac{\mu}{2}\|w-w^t\|^2 \right) + \beta_2\\
&=& \frac{\sigma^2}{K^3} \left(F_k(w) - \left(\nabla F_k(w^t) - \frac{ K}{\sigma}\underbrace{\frac{K}{ n} X_k ( X_k^T w^t -y_k + \alpha_k^t)}_{z^t_k}\right)^T w  + \frac{\mu}{2}\|w-w^t\|^2 \right) + \beta_2\\
&=& \frac{\sigma^2}{K^3} \left(F_k(w) - \left(\nabla F_k(w^t) - (\eta \nabla F_k(w^t) + g_k^t) \right)^T w  + \frac{\mu}{2}\|w-w^t\|^2 \right) + \beta_2,
\end{eqnarray*}
where the last step follows from the claim that  $\eta z_k^t = \eta \nabla F_k(w^t) + g_k^t$. We now prove the claim. First, we have
\begin{eqnarray*}
\eta z_k^t &=& \eta \frac{K}{ n} X_k ( X_k^T w^t -y_k + \alpha_k^t)\\
&=& \eta \frac{K}{ n} X_k ( X_k^T w^t -y_k) + \eta \frac{K}{ n} X_k  \alpha_k^t\\
&=& \eta \left(\frac{K}{ n} X_k ( X_k^T w^t -y_k) + \lambda w^t \right) + \eta \left(\frac{K}{ n} X_k  \alpha_k^t - \lambda w^t \right) \\
&\overset{\eqref{eq:DF_k}}{=} & \eta \nabla F_k(w^t) +  \eta \left(\frac{K}{ n} X_k  \alpha_k^t - \lambda w^t \right).
\end{eqnarray*}
Due to the definition of $g_k^0$ in Step 5 of Algorithm~\ref{alg:PRIMAL} as $g_k^0 = \eta (\frac{K}{n} X_k \alpha_k^0 - \lambda w^0)$, we observe that the claim holds for $t=0$. If we show that 
\[g_k^{t} =  \eta \left(\frac{K}{ n} X_k  \alpha_k^{t} - \lambda w^{t} \right)\]
for all $t\geq 0$, then we are done.  This can be shown by induction.  This finishes the proof of  Theorem~\ref{thm:equiv}.

\end{document}